\newtheorem{theorem}{Theorem}[section]
\newtheorem{lemma}[theorem]{Lemma}
\DeclareMathOperator{\ind}{\mathbbm{1}}
\newcommand{\Ehat}[1]{\hat{\mathbb{E}}\left[#1\right]}
\newcommand{\E}[1]{\mathbb{E}\left[#1\right]}
\newcommand\figwidth{0.35}
\newcommand{\ymv}[0]{\hat{y}_{\text{MV}}}
\DeclareMathOperator*{\argmin}{arg\,min}
\title{Universalizing Weak Supervision}
\author{Changho Shin, Winfred Li, Harit Vishwakarma, Nicholas Roberts, Frederic Sala  \\
Department of Computer Sciences\\
University of Wisconsin-Madison\\
\texttt{\{cshin23, wli525, hvishwakarma, ncroberts2, fsala\}@wisc.edu}}
\begin{document}
	\maketitle
    
    \begin{abstract}\label{sec:abstract}
Weak supervision (WS) frameworks are a popular way to bypass hand-labeling large datasets for training data-hungry models.
These approaches synthesize multiple noisy but cheaply-acquired estimates of labels into a set of high-quality pseudolabels for downstream training.
However, the synthesis technique is specific to a particular kind of label, such as binary labels or sequences, and each new label type requires manually designing a new synthesis algorithm.
Instead, we propose a universal technique that enables weak supervision over any label type while still offering desirable properties, including practical flexibility, computational efficiency, and theoretical guarantees.
We apply this technique to important problems previously not tackled by WS frameworks including learning to rank, regression, and learning in hyperbolic space.
Theoretically, our synthesis approach produces a consistent estimators for learning some challenging but important generalizations of the exponential family model.
Experimentally, we validate our framework and show improvement over baselines in diverse settings including real-world learning-to-rank and regression problems along with learning on hyperbolic manifolds. 
\end{abstract}


\section{Introduction}
\label{sec:intro}

Weak supervision (WS) frameworks help overcome the labeling bottleneck: the challenge of building a large dataset for use in training data-hungry deep models. 
WS approaches replace hand-labeling with synthesizing multiple noisy but cheap sources, called labeling functions, applied to unlabeled data.
As these sources may vary in quality and be dependent, a crucial step is to model their accuracies and correlations.
Informed by this model, high-quality pseudolabels are produced and used to train a downstream model.
This simple yet flexible approach is highly successful in research and industry settings \citep{bach2018snorkel, re2019overton} .

WS frameworks offer three advantages: they are (i) flexible and subsume many existing ways to integrate side information, (ii) computationally efficient, and (iii) they offer theoretical guarantees, including estimator consistency. 
Unfortunately, these benefits come at the cost of being particular to very specific problem settings: categorical, usually binary, labels.
Extensions, e.g., to time-series data \citep{Bach20}, or to segmentation masks \citep{Hooper21}, require a new model for source synthesis, a new algorithm, and more.
%
%
We seek to side-step this expensive one-at-a time process via a \emph{universal approach}, enabling WS to work in \emph{any} problem setting while still providing the three advantageous properties above.

The main technical challenge for universal WS is the diversity of label settings: classification, structured prediction, regression, rankings, and more.
Each of these settings seemingly demands a different approach for learning the source synthesis model, which we refer to as the \emph{label model}.
For example, \cite{Ratner19} assumed that the distribution of the sources and latent true label is an Ising model and relied on a property of such distributions: that the inverse covariance matrix is graph-structured.
It is not clear how to lift such a property to spaces of permutations or to Riemannian manifolds.

We propose a general recipe to handle any type of label. 
The data generation process for the weak sources is modeled with an exponential family distribution that can represent a label from any metric space $(\mathcal{Y}, d_{\mathcal{Y}})$.
We embed labels from $\mathcal{Y}$ into two tractable choices of space: the Boolean hypercube $\{\pm 1\}^d$ and Euclidean space $\mathbb{R}^d$.
The label model (used for source synthesis) is learned with an efficient method-of-moments approach in the embedding space. 
It only requires solving a number of scalar linear or quadratic equations.
Better yet, for certain cases, we show this estimator is consistent via finite sample bounds.

Experimentally, we demonstrate our approach on five choices of problems never before tackled in WS:
\begin{itemize}[leftmargin=*,topsep=-1pt,noitemsep]\setlength\itemsep{2pt}
    \item \textbf{Learning rankings:} on two real-world rankings tasks, our approach with as few as five sources performs better than supervised learning with a smaller number of true labels. In contrast, an adaptation of the Snorkel~\citep{Ratner18} framework cannot reach this performance with as many as 18 sources. 
    \item \textbf{Regression:} on two real-world regression datasets, when using 6 or more labeling function, the performance of our approach is comparable to fully-supervised models. 
    \item \textbf{Learning in hyperbolic spaces:} on a geodesic regression task in hyperbolic space, we consistently outperform fully-supervised learning, even when using only 3 labeling functions (LFs). 
    \item \textbf{Estimation in generic metric spaces:} in a synthetic setting of metric spaces induced by random graphs, we demonstrate that our method handles LF heterogeneity better than the majority vote baseline. 
    \item \textbf{Learning parse trees:} in semantic dependency parsing, we outperform strong baseline models.
\end{itemize}

In each experiment, we confirm the following desirable behaviors of weak supervision: 
  (i) more high-quality and independent sources yield better pseudolabels,
(ii) more pseudolabels can yield better performance compared to training on fewer clean labels,
    (iii) when source accuracy varies, our approach outperforms generalizations of majority vote. 

\section{Problem Formulation and Label Model}
\label{sec:setup}

\begin{figure}
  \centering
  \includegraphics[width=0.9\textwidth]{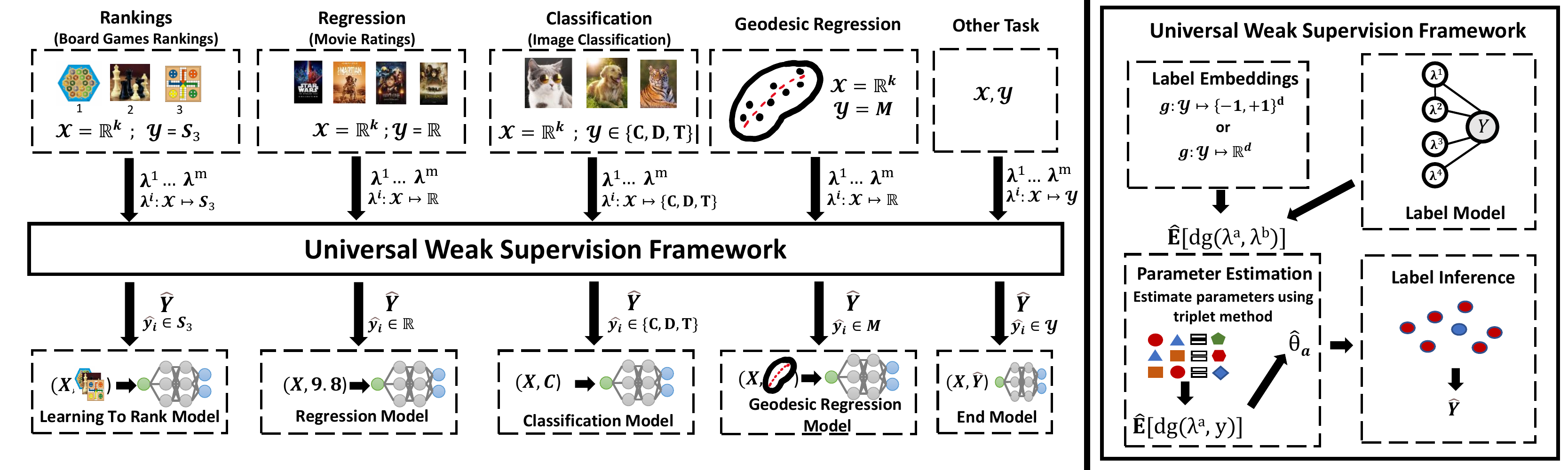}
  \caption{Applications enabled by our approach (left) and weak supervision pipeline (right).}
  \label{fig:uws-framework}
\end{figure}

We give background on weak supervision frameworks, provide the problem formulation, describe our choice of universal label model, special cases, the embedding technique we use to reduce our problem to two easily-handled cases, and discuss end model training.

\noindent {\bf Background:} 
WS frameworks are a principled way to integrate weak or noisy information to produce label estimates. These weak sources include small pieces of code expressing heuristic principles, crowdworkers, lookups in external knowledge bases, pretrained models, and many more \citep{karger2011iterative, mintz2009distant, gupta2014improved, Dehghani2017, Ratner18}. Given an unlabeled dataset, users construct a set of \emph{labeling functions} (LFs) based on weak sources and apply them to the data. The  estimates produced by each LF are synthesized to produce pseudolabels that can be used to train a downstream model.

\noindent {\bf Problem Formulation:} Let $x_1, x_2, \ldots, x_n$ be a dataset of unlabeled datapoints from $\mathcal{X}$. Associated with these are labels from an arbitrary metric space $\mathcal{Y}$ with metric $d_{\mathcal{Y}}$\footnote{We slightly abuse notation by allowing $d_{\mathcal{Y}}$ to be $d^c$, where $d$ is some base metric and $c$ is an exponent. This permits us to use, for example, the squared Euclidean distance---not itself a metric---without repeatedly writing the exponent $c$.}. In conventional supervised learning, we would have pairs $(x_1, y_1), \ldots, (x_n, y_n)$; however, in WS, we \emph{do not} have access to the labels. Instead, we have estimates of $y$ from $m$ labeling functions (LFs). Each such LF $s : \mathcal{X} \rightarrow \mathcal{Y}$ produces an estimate of the true label $y$ from a datapoint $x$. We write $\lambda^a(i) \in \mathcal{Y}$ for the output of the $a$th labeling function $s^a$ applied to the $i$th sample.
Our goal is to obtain an estimate of the true label $y_i$ using the LF outputs $\lambda^1(i), \ldots, \lambda^m(i)$. This estimate $\hat{y}$, is used to train a downstream model.
To produce it, we learn the \emph{label model} $P(\lambda^1, \lambda^2, \ldots, \lambda^m | y)$. The main challenge is that \emph{we never observe samples of $y$}; it is \emph{latent}. 

We can summarize the weak supervision procedure in two steps: 
\begin{itemize}[leftmargin=*,topsep=0pt,noitemsep]\setlength\itemsep{2pt}
    \item \textbf{Learning the label model}: use the samples $\lambda^a(i)$ to learn the label model $P(\lambda^1, \lambda^2, \ldots, \lambda^m | y)$, 
    \item \textbf{Perform inference}: compute $\hat{y}_i$, or $P(y_i | \lambda^1(i), \ldots, \lambda^m(i))$, or a related quantity.
\end{itemize}

\noindent {\bf Modeling the sources}
Previous approaches, e.g., \cite{Ratner19}, select a particular parametric choice to model $p(\lambda^1, \ldots, \lambda^m | y)$ that balances two goals: (i) model richness that captures differing LF accuracies and correlations, and (ii) properties that permit efficient learning.
Our setting demands greater generality. However, we still wish to exploit the properties of exponential family models. The natural choice is
\begin{align}
p(\lambda^1, \ldots, \lambda^m | y) = \frac{1}{Z} \exp\big(\underbrace{\sum_{a=1}^m -\theta_a d_{\mathcal{Y}}(\lambda^a, y)}_{\text{Accuracy Potentials}} + \underbrace{\sum_{(a,b) \in E} -\theta_{a,b} d_{\mathcal{Y}}(\lambda^a, \lambda^b)}_{\text{Correlation Potentials}}  \big).
\label{eq:expmodelnat}
\end{align}
Here, the set $E$ is a set of correlations corresponding to the graphical representation of the model (Figure~\ref{fig:uws-framework}, right). Observe how source quality is modeled in \eqref{eq:expmodelnat}. If the value of $\theta_a$ is very large, any disagreement between the estimate $\lambda^a$ and $y$ is penalized through the distance $d_{\mathcal{Y}}(\lambda^a, y)$ and so has low probability. If $\theta_a$ is very small, such disagreements will be common; the source is \emph{inaccurate}. 

We also consider a more general version of \eqref{eq:expmodelnat}. We replace $-\theta_a d_{\mathcal{Y}}(\lambda^a, y)$ with a per-source distance $d_{\theta_a}$. For example, for $\mathcal{Y} = \{\pm 1\}^d$, $d_{\theta_a}(\lambda^a, y) = -\theta_a^{T} |\lambda^{a} - y|$, with $\theta_a \in \mathbb{R}^d$, generalizes the Hamming distance. Similarly, for $\mathcal{Y} = \mathbb{R}^d$, we can generalize $-\theta_a \|\lambda^a-y\|^2$ with $d_{\theta_a}(\lambda^a, y) = -(\lambda^a-y)^T \theta_a (\lambda^a-y)$, with $\theta_a \in \mathbb{R}^{d \times d}$ p.d., so that LF errors are not necessarily isotropic. In the Appendix \ref{appendix:related-work}, we detail variations and special cases of such models along with relationships to existing  weak supervision work. Below, we give a selection of examples, noting that the last three cannot be tackled with existing methods.
\begin{itemize}[leftmargin=*,topsep=-1pt,noitemsep]\setlength\itemsep{2pt}
\item {\bf Binary classification}: $\mathcal{Y} = \{\pm 1\}$, $d_{\mathcal{Y}}$ is the Hamming distance: this yields a shifted Ising model for standard binary classification, as in \cite{fu2020fast}. 
\item {\bf Sequence learning}:  $\mathcal{Y} = \{\pm 1\}^d$, $d_{\mathcal{Y}}$ is the Hamming distance: this yields an Ising model for sequences, as in \cite{sala2019multiresws} and \cite{Bach20}.
\item {\bf Ranking}:  $\mathcal{Y} = S_{\rho}$, the permutation group on $\{1, \ldots, \rho\}$, $d_{\mathcal{Y}}$ is the Kendall tau distance. This is a \emph{heterogenous Mallows model}, where rankings are produced from varying-quality sources. If $m=1$, we obtain a variant of the conventional Mallows model \citep{mallows57God}.
\item {\bf Regression}: $\mathcal{Y} =  \mathbb{R}$, $d_{\mathcal{Y}}$ is the squared $\ell_2$ distance: it produces sources in $\mathbb{R}^d$ with Gaussian errors.
\item {\bf Learning on Riemannian manifolds}: $\mathcal{Y} = M$, a Riemannian manifold (e.g., hyperbolic space), $d_{\mathcal{Y}}$ is the Riemannian distance $d_M$ induced by the space's Riemannian metric.
\end{itemize}

\noindent {\bf Majority Vote (MV) and its Relatives} 
 A simplifying approach often used as a baseline in weak supervision is the \emph{majority vote}. Assume that the sources are conditionally independent (i.e. $E$ is empty in \eqref{eq:expmodelnat}) and all accuracies are identical. In this case, there is no need to learn the model \eqref{eq:expmodelnat}; instead, the most ``popular'' label is used. For binary labels, this is the majority label. In the universal setting, a natural generalization is 
\begin{align}
    \label{eq:mvdef}
    \ymv = \argmin_{z \in \mathcal{Y}} \frac{1}{m} \sum\nolimits_{a=1}^m d_{\mathcal{Y}}(\lambda^a, z).
\end{align}
Special cases of \eqref{eq:mvdef} have their own name; for $S_{\rho}$, it is the Kemeny rule \citep{Kemeny59}. For $d_M$, the squared Riemannian manifold distance, $\ymv$ is called the \emph{Fr\'{e}chet} or \emph{Karcher mean}.

Majority vote, however, is insufficient in cases where there is variation in the source qualities. We must learn the label model. However, generically learning \eqref{eq:expmodelnat} is an ambitious goal. Even cases that specialize \eqref{eq:expmodelnat} in multiple ways have only recently been fully solved, e.g., the permutation case for identical $\theta_a$'s was fully characterized by \cite{Mukherjee16}. To overcome the challenge of generality, we opt for an embedding approach that reduces our problem to two tractable cases.

\begin{table}[]
\centering
\footnotesize
\begin{tabular}{|l|l|l|l|l|l|}
\hline
\bf Problem               & \bf Set          & \bf Distance             & \bf MV Equivalent              & $\text{Im}(g)$ & \bf End Model                        \\ \hline
Binary Classification & $\{\pm 1\}$  & $\ell_1$              & Majority Vote                         & $\{\pm 1\}$           & Binary Classifier                \\ \hline
Ranking      & $S_{\rho}$   & Kendall tau           & Kemeny Rule & $\{\pm 1\}^{\binom{\rho}{2}}$         & Learning to Rank \\ \hline
Regression            & $\mathbb{R}$ & squared-$\ell_2$               & Arithmetic Mean                                  & $\mathbb{R}$          & Linear Regression                \\ \hline
Riemannian Manifold   & $M$          & Riemannian     & Fr\'{e}chet Mean                          & $\mathbb{R}^d$        & Geodesic Regression              \\  \hline
Dependency Parsing & $\mathcal{T}$ & $\ell_2$  &  Fr\'{e}chet Mean   & $\mathbb{R}^{d \times d}$ & Parsing Model \\ \hline
\end{tabular}
\label{table:setups}
\caption{A variety of problems enabled by universal WS, with specifications for sets, distances, and models.}
\end{table}

\noindent {\bf Universality via Embeddings} To deal with the very high level of generality, we reduce the problem to just two metric spaces: the boolean hypercube $\{-1, +1\}^d$ and Euclidean space $\mathbb{R}^d$. To do so, let $g : \mathcal{Y} \rightarrow \{\pm 1\}^d$ (or $g : \mathcal{Y} \rightarrow \mathbb{R}^d$) be an injective \emph{embedding function}. The advantage of this approach is that if $g$ is \emph{isometric}---distance preserving---then probabilities are preserved under $g$. This is because the sufficient statistics are preserved: for example, for $g : \mathcal{Y} \rightarrow \mathbb{R}^d$, $-\theta_a d_{\mathcal{Y}}(\lambda^a, y) = -\theta_a d(g(\lambda^a), g(y)) = -\theta_a \|g(\lambda^a)-g(y) \|^c$
, so that if $g$ is a bijection, we obtain a multivariate normal for $c=2$. If $g$ is not isometric, there is a rich literature on \emph{low-distortion} embeddings, with Bourgain's theorem as a cornerstone result \citep{Bourgain85}. This can be used to bound the error in recovering the parameters for any label type.

\noindent {\bf End Model} Once we have produced pseudolabels---either by applying generalized majority vote or by learning the label model and performing inference---we can use the labels to train an end model. Table~\ref{table:setups} summarizes examples of problem types, explaining the underlying label set, the metric, the generalization of majority vote, the embedding space $\text{Im}(g)$, and an example of an end model.
\vspace{-1em}

	\section{Universal Label Model Learning}
\label{sec:algorithm}
Now that we have a specification of the label model distribution~\eqref{eq:expmodelnat} (or its generalized form with per-source distances), we must learn the distribution from the observed LFs $\lambda^1, \ldots, \lambda^m$. Afterwards, we can perform inference to compute $\hat{y}$ or $p(y | \lambda^1, \ldots, \lambda^m)$ or a related quantity, and use these to train a downstream model. A simplified model with an intuitive explanation for the isotropic Gaussian case is given in Appendix~\ref{appendix:isotropic}.


\begin{algorithm}[t]
\small
	\caption{Universal Label Model Learning }
	\begin{algorithmic}
		\STATE \textbf{Input:}
		Output of labeling functions $\lambda^a(i)$, correlation set $E$, prior $p$ for $Y$, optionally embedding function $g$. 
		\STATE \textbf{Embedding:} If $g$ is not given, use Multidimensional Scaling (MDS) to obtain embeddings $g(\lambda^a) \in \mathbb{R}^d \text{ } \forall a$ 
		\FOR{$a \in \{1,2,\ldots, m\}$}
			\STATE For {$b : (a,b) \in E$}\quad \textbf{Estimate Correlations:}$\forall i,j$, $\Ehat{g(\lambda^a)_i g(\lambda^b)_j} = \frac{1}{n} \sum_{t=1}^n g(\lambda^a(t))_i g(\lambda^b(t))_j$
			\STATE \textbf{Estimate Accuracy:} Pick $b, c: (a,b) \not\in E, (a,c) \not\in E, (b,c)\not\in E$. 
			\STATE {\bf{if} $\text{Im}(g) = \{\pm 1\}^d$}
		    	\STATE $\forall i$, Estimate $\hat{O}_{a,b} = \hat{P}(g(\lambda^a)_i=1, g(\lambda^b)_i=1)$, $\hat{O}_{a,c}, \hat{O}_{b,c}$, Estimate $\ell_{a} = \hat{P}(g(\lambda^a)_i=1)$, $\ell_{b}, \ell_{c}$
		    	\STATE \quad Accuracies $\leftarrow \textsc{QuadraticTriplets}(O_{a,b}, O_{a,c}, O_{b,c}, \ell_{a} , \ell_{b}, \ell_{c}, p, i)$
		    	\STATE {\bf else} $\forall i$, Estimate $\hat{e}_{a,b} := \Ehat{g(\lambda^a)_i g(\lambda^b)_i} =  \frac{1}{n} \sum_{t=1}^n g(\lambda^a(t))_i g(\lambda^b(t))_i, \hat{e}_{a,c}, \hat{e}_{b,c}$
		    	\STATE Accuracies $\leftarrow \textsc{ContinuousTriplets}(\hat{e}_{a,b}, \hat{e}_{a,c}, \hat{e}_{b,c},p, i)$
            \STATE Recover accuracy signs \citep{fu2020fast}
		\ENDFOR
		\RETURN $\hat{\theta}_a, \hat{\theta}_{a,b}$ by running the {\bf backward mapping} on accuracies and correlations
	\end{algorithmic}
	\label{alg:overall}
\end{algorithm}

\noindent {\bf Learning the Universal Label Model}
Our general approach is described in Algorithm~\ref{alg:overall}; its steps can be seen in the pipeline of Figure~\ref{fig:uws-framework}. It involves first computing an embedding $g(\lambda^a)$ into $\{\pm 1\}^d$ or $\mathbb{R}^d$; we use multidimensional scaling into $\mathbb{R}^d$ as our standard. 
Next, we learn the per-source \emph{mean parameters}, then finally compute the \emph{canonical parameters} $\theta_a$. The mean parameters are $\mathbb{E}[d_G(g(\lambda^a)g(y))]$; which reduce to moments like $\mathbb{E}[g(\lambda^a)_ig(y)_i]$. Here $d_G$ is some distance function associated with the embedding space. To estimate the mean parameters without observing $y$ (and thus not knowing $g(y)$), we exploit observable quantities and conditional independence.
%
As long as we can find, for each LF, two others that are mutually conditionally independent, we can produce a simple non-linear system over the three sources (in each component of the moment). Solving this system recovers the mean parameters up to sign. We recover these as long as the LFs are better than random on average (see \cite{Ratner19}). 

The systems formed by the conditional independence relations differ based on whether $g$ maps to the Boolean hypercube $\{\pm 1\}^d$ or Euclidean space $\mathbb{R}^d$. In the latter case we obtain a system that has a simple closed form solution, detailed in Algorithm~\ref{alg:continuous}. In the discrete case (Algorithm~\ref{alg:quadratic}, Appendix \ref{appendix:extended-algorithmic-details}), we need to use the quadratic formula to solve the system.
We require an estimate of the prior $p$ on the label $y$; there are techniques do so  \citep{anandkumar2014tensor, Ratner19}; we tacitly assume we have access to it. The final step uses the backward mapping from mean to canonical parameters~\citep{wainwright2008graphical}. This approach is general, but it is easy in special cases: for Gaussians the canonical $\theta$ parameters are the inverse of the mean parameters.

\noindent {\bf Performing Inference: Maximum Likelihood Estimator} Having estimated the label model canonical parameters $\hat{\theta}_a$ and $\hat{\theta}_{a,b}$ for all the sources, we use the maximum-likelihood estimator
\begin{align}
\hat{y}_i = \argmin_{z \in \mathcal{Y}} \frac{1}{m} \sum\nolimits_{a=1}^m   d_{\hat{\theta}_a}(\lambda^a(i), z)
\label{eq:weightedkem}
\end{align}
Compare this approach to majority vote~\eqref{eq:mvdef}, observing that MV can produce arbitrarily bad outcomes. For example, suppose that we have $m$ LFs, one of which has a very high accuracy (e.g., large $\theta_1$) and the others very low accuracy (e.g., $\theta_2, \theta_3, \ldots, \theta_m = 0)$. Then, $\lambda^1$ is nearly always correct, while the other LFs are nearly random. However, \eqref{eq:mvdef} weights them all equally, which will wash out the sole source of signal $\lambda^1$. On the other hand, \eqref{eq:weightedkem} resolves the issue of equal weights on bad LFs by directly downweighting them. 

\noindent {\bf Simplifications} While the above models can handle very general scenarios, special cases are dramatically simpler. In particular, in the case of isotropic Gaussian errors, where $d_{\theta_{a}}(\lambda^a, y) = -\theta_a \|\lambda_a-y\|^2$, there is no need to perform an embedding, since we can directly rely on empirical averages like $\frac{1}{n}\sum_{i=1}^n d_{\theta_{a}}(\lambda^a(i), \lambda^b(i))$. The continuous triplet step simplifies to directly estimating the covariance entries $\hat{\theta}_a^{-1}$; the backward map is simply inverting this. More details can be found in Appendix~\ref{appendix:isotropic}. 

\section{Theoretical Analysis: Estimation Error \& Consistency}
\label{sec:theory}
We show that, under certain conditions, Algorithm~\ref{alg:overall} produces consistent estimators of the mean parameters. We provide convergence rates for the estimators. As corollaries, we apply these to the settings of rankings and regression, where isometric embeddings are available. Finally, we give a bound on the inconsistency due to embedding distortion in the non-isometric case.

  \begin{wrapfigure}{L}{0.5\textwidth}
    \begin{minipage}{0.5\textwidth}
    \begin{algorithm}[H]
	\caption{\textsc{ContinuousTriplets}}
	\begin{algorithmic}
		\STATE \textbf{Input:}
        Estimates $\hat{e}_{a,b}, \hat{e}_{a,c}, \hat{e}_{b,c}$, prior $p$, index $i$
        \STATE Obtain variance $\mathbb{E}{g(Y)_i^2}$ from prior $p$
  		\STATE $\hat{\mathbb{E}}[g(\lambda^a)g(y)] \gets \sqrt{|\hat{e}_{a,b}| \cdot |\hat{e}_{a,c}| \cdot \E{Y^2} \, / \, |\hat{e}_{b,c}|}$
		\STATE $\hat{\mathbb{E}}[g(\lambda^b)g(y)] \gets \sqrt{|\hat{e}_{a,b}| \cdot |\hat{e}_{b,c}| \cdot \E{Y^2} \, / \, |\hat{e}_{a,c}|}$
		\STATE $\hat{\mathbb{E}}[g(\lambda^c)g(y)] \gets \sqrt{|\hat{e}_{a,c}| \cdot |\hat{e}_{b,c}| \cdot \E{Y^2} \, / \, |\hat{e}_{a,b}|}$
		\RETURN Accuracies $\hat{\mathbb{E}}[g(\lambda^a)g(y)],\hat{\mathbb{E}}[g(\lambda^b)g(y)],\hat{\mathbb{E}}[g(\lambda^c)g(y)] \;$
	\end{algorithmic}
	\label{alg:continuous}
\end{algorithm}
    \end{minipage}
    \vspace{-4mm}
  \end{wrapfigure}

\noindent {\bf Boolean Hypercube Case} We introduce the following finite-sample estimation error bound. It demonstrates consistency for mean parameter estimation for a triplet of LFs when using Algorithm~\ref{alg:overall}. To keep our presentation simple, we assume: (i) the class balance $P(Y=y)$ are known, (ii) there are two possible values of $Y$, $y_1$ and $y_2$, (iii) we can correctly recover signs (see \cite{Ratner19}), (iv) we can find at least three conditionally independent labeling functions that can form a triplet, (v) the embeddings are isometric. 

\begin{restatable}[]{theorem}{firstLemma}
For any $\delta>0$, for some $y_1$ and $y_2$ with known class probabilities $p=P(Y=y_1)$, the quadratic triplet method recovers $\alpha_i = P(g(\lambda^a)_i=1 |Y=y), \beta_i = P(g(\lambda^b)_i=1 |Y=y), \gamma_i = P(g(\lambda^c)_i=1 |Y=y)$ up to error $O((\frac{\ln(2d^2/\delta)}{2n})^{1/4})$ with probability at least $1-\delta$ for any conditionally independent label functions $\lambda_a,\lambda_b,\lambda_c$ and for all $i \in [d].$ 
\end{restatable}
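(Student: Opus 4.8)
The plan is to treat the quadratic triplet method as a deterministic map from six observable moments to the target conditional probabilities, and then bound separately (i) the statistical error in estimating those moments and (ii) how that error propagates through the map. Writing $X_a = \ind[g(\lambda^a)_i = 1] \in \{0,1\}$ and conditioning on the binary latent $Y \in \{y_1, y_2\}$ with $p = P(Y{=}y_1)$, conditional independence yields the key factorization. First I would record, for the centered pairwise moment $\tilde e_{a,b} := O_{a,b} - \ell_a \ell_b = \Cov(X_a, X_b)$, the identity $\tilde e_{a,b} = p(1-p)\,\Delta_a \Delta_b$, where $\Delta_a = P(X_a{=}1|Y{=}y_1) - P(X_a{=}1|Y{=}y_2)$ is the accuracy gap, with the analogous identities for $(a,c)$ and $(b,c)$. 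This is the exact discrete analogue of the product structure exploited in Algorithm~\ref{alg:continuous}, and it makes the quadratic triplet solvable: $\Delta_a^2 = \tilde e_{a,b}\tilde e_{a,c}/\big(p(1-p)\,\tilde e_{b,c}\big)$, with sign fixed by assumption (iii), after which $\alpha_i,\beta_i,\gamma_i$ are recovered for each value of $Y$ by the linear relations $P(X_a{=}1|Y{=}y_1) = \ell_a + (1-p)\Delta_a$ and $P(X_a{=}1|Y{=}y_2) = \ell_a - p\,\Delta_a$.

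For the statistical step, each of $\hat O_{a,b}, \hat O_{a,c}, \hat O_{b,c}$ and $\hat\ell_a, \hat\ell_b, \hat\ell_c$ is an empirical mean of $\{0,1\}$-valued variables, so Hoeffding's inequality controls each deviation, and a union bound over the $O(d^2)$ estimated quantities across the $d$ coordinates gives, with probability at least $1-\delta$, a uniform bound $|\hat O - O|, |\hat\ell - \ell| \le \epsilon$ with $\epsilon = \sqrt{\ln(2d^2/\delta)/(2n)}$, simultaneously for all $i \in [d]$. I would then push $\epsilon$ through the map: the centered moments satisfy $|\hat{\tilde e}_{a,b} - \tilde e_{a,b}| \le |\hat O_{a,b} - O_{a,b}| + |\hat\ell_a\hat\ell_b - \ell_a\ell_b| = O(\epsilon)$ since the $\ell$'s lie in $[0,1]$; and, because the true $\tilde e_{b,c}$ is a fixed nonzero constant under the non-degeneracy implicit in the model, for $n$ large the division by $\hat{\tilde e}_{b,c}$ is stable, so $\big|\widehat{\Delta_a^2} - \Delta_a^2\big| = O(\epsilon)$ as well.

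The crux—and the source of the fourth-root rate—is the square-root extraction $\hat\Delta_a = \pm\sqrt{\widehat{\Delta_a^2}}$. Rather than assuming $\Delta_a^2$ is bounded away from $0$ (which would give an $O(\epsilon)$ rate via a local Lipschitz bound), I would use the global inequality $|\sqrt{u}-\sqrt{v}| \le \sqrt{|u-v|}$, valid for all $u,v \ge 0$ with no lower bound on the argument. This immediately yields $|\hat\Delta_a - \Delta_a| \le \sqrt{O(\epsilon)} = O\big((\ln(2d^2/\delta)/(2n))^{1/4}\big)$. The final linear reconstruction adds only an $O(\epsilon)$ term, which is dominated, so each of $\alpha_i,\beta_i,\gamma_i$ inherits the $O(\epsilon^{1/2})$ bound uniformly in $i$.

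The main obstacle is precisely this square-root step together with the division preceding it. The division forces the analysis to be effectively local: the hidden constants depend on $p(1-p)$ and on the true accuracy gaps being bounded away from zero, which is what keeps the denominator $\tilde e_{b,c}$ from vanishing and makes the ratio stable once $n$ is large. The square root is then what degrades the Hoeffding rate from $n^{-1/2}$ to $n^{-1/4}$. Everything else—the moment factorization and the propagation through sums and products—is routine error-accounting, so I would keep those steps terse and concentrate the care on (a) justifying stability of the ratio for $n$ large enough and (b) invoking $|\sqrt u - \sqrt v|\le\sqrt{|u-v|}$ to obtain the stated exponent without any spurious lower-bound assumption.
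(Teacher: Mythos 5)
Your proposal is correct, but it reaches the bound by a genuinely different route than the paper. The paper's proof works directly with the uncentered moments $O^{ab}_i$ and the factorization $\mu^a_i P (\mu^b_i)^T = O^{ab}_i$: it eliminates $\alpha_i$ and $\gamma_i$ from the three pairwise equations, multiplies through by $(t\beta - \hat{q}_b r)^2$ to obtain an explicit quadratic $a'\beta^2 + b'\beta + c' = 0$, tracks the perturbations $\varepsilon_{a'},\varepsilon_{b'},\varepsilon_{c'}$ of the three coefficients, and extracts the $n^{-1/4}$ rate from the discriminant $\sqrt{(b')^2 - 4a'c'}$ in the quadratic formula. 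You instead center the moments, use the identity $\Cov(X_a,X_b) = p(1-p)\Delta_a\Delta_b$ for conditionally independent indicators given a binary latent (which is correct, and is exactly the product structure that makes \textsc{ContinuousTriplets} work), reduce to the triplet ratio $\Delta_a^2 = \tilde e_{a,b}\tilde e_{a,c}/(p(1-p)\tilde e_{b,c})$, and get the fourth root from $|\sqrt{u}-\sqrt{v}|\le\sqrt{|u-v|}$. In exact arithmetic the two parametrizations agree---the two roots of the paper's quadratic correspond to your $\pm\sqrt{\widehat{\Delta_a^2}}$---so you are analyzing an algebraically equivalent form of the same estimator. What your version buys is transparency: the source of the $n^{-1/4}$ rate is isolated in a single square-root extraction rather than buried in the discriminant, and the non-degeneracy conditions (class balance away from $\{0,1\}$, accuracy gaps bounded away from zero so that $\tilde e_{b,c}$ does not vanish) are made explicit, whereas the paper hides them in the $O(\cdot)$ constants of the coefficient perturbations. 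What the paper's version buys is that it analyzes \textsc{QuadraticTriplets} exactly as stated in Algorithm~\ref{alg:quadratic}, coefficient by coefficient. Two minor points to tighten: your estimate $\widehat{\Delta_a^2}$ can be negative for finite $n$, so you should clip at zero (or take absolute values as Algorithm~\ref{alg:continuous} does) before applying the square-root inequality---this only improves the bound since the population value is nonnegative; and the union bound is really over $6d$ empirical quantities, so $\ln(2d^2/\delta)$ is simply slack inherited from the theorem statement.
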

We state our result via terms like $\alpha_i = P(g(\lambda^a)_i|Y=y)$; these can be used to obtain $\Ehat{g(\lambda^a)_i|y}$ and so recover $\Ehat{d_{\mathcal{Y}}(\lambda^a, y)}$.

To showcase the power of this result, we apply it to rankings from the symmetric group $S_{\rho}$ equipped with the Kendall tau distance $d_{\tau}$ \citep{Kendall38}. This estimation problem is more challenging than learning the conventional Mallows model~\citep{mallows57God}---and the standard Kemeny rule \citep{Kemeny59} used for rank aggregation will fail if applied to it. Our result yields consistent estimation when coupled with the aggregation step~\eqref{eq:weightedkem}.

We use the isometric embedding $g$ into $\{\pm 1\}^{\binom{\rho}{2}}$: For $\pi \in S_{\rho}$, each entry in $g(\pi)$ corresponds to a pair $(i,j)$ with $i<j$, and this entry is $1$ if in $\pi$ we have $i < j$ and $-1$ otherwise. 
We can show for $\pi, \gamma \in S_{\rho}$ that $\sum_{i=1} g(\pi)_i g(\gamma)_i =  \binom{\rho}{2} - 2 d_{\tau}(\pi, \gamma)$, and so recover $\Ehat{g(\lambda^a)_ig(y)_i}$, and thus $\Ehat{d_{\tau}(\lambda^a, y)}$. Then, 
\begin{restatable}[]{corollary}{estimateTheta}
For any $\delta>0$, $U>0$, a prior over $y_1$ and $y_2$ with known class probability $p$, and using Algorithm~\ref{alg:overall} and Algorithm~\ref{alg:quadratic}, for any conditionally independent triplet $\lambda^a,\lambda^b,\lambda^c$, with parameters $U>\theta_a, \theta_b, \theta_c > 4\ln(2)$, we can recover $\theta_a, \theta_b, \theta_c$ up to error $O(g_2^{-1}(\theta+(\log(2\rho^2)/(2 \delta n))^{1/4})-\theta)$ with probability at least $1-\delta$, where $g_2(U) = (-\rho e^{-U})/((1-e^{-U})^2)+\sum_{j=1}^{\rho}(j^2e^{-U j})/((1-e^{-U j})^2)$.
\label{thm:estimateTheta}
\end{restatable}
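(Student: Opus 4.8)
The plan is to reduce the statement to the Boolean-hypercube estimation guarantee established just above (the $n^{-1/4}$ bound on mean-parameter recovery for a conditionally independent triplet) and then transport the resulting error through the Mallows backward mapping. First I would fix the isometric embedding $g:S_\rho\to\{\pm1\}^{\binom{\rho}{2}}$, so that $d=\binom{\rho}{2}$ and $\sum_i g(\pi)_i g(\gamma)_i=\binom{\rho}{2}-2d_\tau(\pi,\gamma)$. Running Algorithm~1 with \textsc{QuadraticTriplets} on $\lambda^a,\lambda^b,\lambda^c$ recovers the coordinatewise moments $\mathbb{E}[g(\lambda^a)_i g(y)_i]$, and the isometry aggregates these into an estimate $\widehat{\mathbb{E}[d_\tau(\lambda^a,y)]}=\tfrac12\big(\binom{\rho}{2}-\sum_i \widehat{\mathbb{E}[g(\lambda^a)_i g(y)_i]}\big)$ of the Mallows mean parameter. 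Since $d=\Theta(\rho^2)$, the dimension should enter the bound only through a logarithm, which up to constants gives the $\log(2\rho^2)$ term and preserves the $n^{-1/4}$ rate; write the resulting mean-parameter error as $\epsilon=O((\log(2\rho^2)/(2\delta n))^{1/4})$, valid with probability at least $1-\delta$ by the preceding theorem.

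The next step is to invert the mean map. Writing the log-partition function as $\log Z(\theta)=\sum_{j=1}^{\rho}\log(1-e^{-j\theta})-\rho\log(1-e^{-\theta})$, the mean parameter is $\mu(\theta):=\mathbb{E}_\theta[d_\tau]=-\tfrac{d}{d\theta}\log Z(\theta)=\tfrac{\rho e^{-\theta}}{1-e^{-\theta}}-\sum_{j=1}^{\rho}\tfrac{j e^{-j\theta}}{1-e^{-j\theta}}$, and the backward mapping is $\theta=\mu^{-1}(\mu(\theta))$, so I would set $\hat\theta_a=\mu^{-1}(\widehat{\mathbb{E}[d_\tau(\lambda^a,y)]})$. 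Differentiating term by term gives exactly $\mu'(\theta)=-\mathrm{Var}_\theta(d_\tau)=\tfrac{-\rho e^{-\theta}}{(1-e^{-\theta})^2}+\sum_{j=1}^{\rho}\tfrac{j^2 e^{-j\theta}}{(1-e^{-j\theta})^2}$, which is precisely the function $g_2$ in the statement; being a negative variance it is strictly negative, so $\mu$ is a smooth strictly decreasing bijection and $1/|g_2|$ is the local Lipschitz modulus of $\mu^{-1}$. Propagating the additive error $\eta$ with $|\eta|\le\epsilon$ through the monotone map yields $|\hat\theta_a-\theta_a|=|\mu^{-1}(\mu(\theta_a)+\eta)-\theta_a|$, which is the claimed $O(g_2^{-1}(\theta+\epsilon)-\theta)$ bound once $g_2$ is identified as the sensitivity (Jacobian) of the mean map controlling the inversion.

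Finally I would use the hypothesis $4\ln 2<\theta_a,\theta_b,\theta_c<U$ to make the bound uniform across the triplet. Because $|g_2(\theta)|\to 0$ as $\theta\to\infty$, the backward map flattens for large $\theta$; the upper cap $U$ bounds $|g_2|$ away from zero on the admissible interval and hence bounds the Lipschitz constant of $\mu^{-1}$, while the lower bound $4\ln 2$ keeps $e^{-\theta}$ small enough that the series is well-conditioned and the sign-recovery step of \cite{Ratner19} applies, so $\mu^{-1}$ is single-valued on the relevant range. A mean-value argument then converts $\epsilon$ into the stated error with the inherited probability $1-\delta$. I expect the main obstacle to be exactly this last, quantitative inversion of the nonlinear, $\rho$-dependent backward map: one must lower-bound $|g_2|$ uniformly over $[4\ln 2,U]$ and, crucially, verify that the $\rho$-dependence of both $g_2$ and the aggregated Boolean error stays logarithmic rather than polynomial in $\rho$ (naively summing $\binom{\rho}{2}$ coordinatewise errors would introduce a spurious $\rho^2$ prefactor, so the aggregate distance statistic must be controlled as a single bounded quantity). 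By contrast, the reduction through the isometric Kendall embedding and the identification $\mu'=g_2$ are comparatively routine.
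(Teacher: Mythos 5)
Your proposal follows essentially the same route as the paper's proof: apply the Boolean-hypercube triplet guarantee coordinatewise over the $\binom{\rho}{2}$ pairwise-comparison embedding, aggregate into an estimate of $\mathbb{E}[d_\tau(\lambda^a,y)]$, and invert the Mallows mean map, with $g_2$ identified as its $\theta$-derivative whose sign and monotonicity on $(4\ln 2, U)$ control the inversion (the paper establishes this via a separate term-by-term lemma rather than your negative-variance observation, but to the same effect). The one caveat you flag---whether summing coordinatewise errors introduces a $\binom{\rho}{2}$ prefactor---is real but not a gap in your argument relative to the paper's: the paper's own proof incurs exactly that factor in its intermediate bound on $\hat{\mathbb{E}}[d_\tau(\lambda^a,Y)]$, so you are not missing any device the paper uses to avoid it.
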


Note that the error goes to zero as $O(n^{-1/4})$. This is an outcome of using the quadratic system, which is needed for generality. In the easier cases, where the quadratic approach is not needed (including the regression case), the rate is $O(n^{-1/2})$. Next, note that the scaling in terms of the embedding dimension $d$ is $O((\log(d))^{1/4})$---this is efficient. The $g_2$ function captures the cost of the backward mapping.

\noindent {\bf Euclidean Space $\mathbb{R}^d$ Case} 
%
The following is an estimation error bound for the continuous triplets method in regression, where we use the squared Euclidean distance. The result is for $d=1$ but can be easily extended.
\begin{restatable}[]{theorem}{estimateA}
Let $\Ehat{g(\lambda^a)g(y)}$ be an estimate of the accuracies $\E{g(\lambda^a)g(y)}$ using $n$ samples, where all LFs are conditionally independent given $Y$. If the signs of $a$ are recoverable, then with high probability
\begin{align*}
\mathbb{E}[|| \Ehat{g(\lambda^a)g(y)} - \E{g(\lambda^a)g(y)} ||_2] = O\left(\left({a^{-10}_{|min|}}+ {a^{-6}_{|min|}}\right) \sqrt{{\max(e_{max}^5,e_{max}^6)} / {n}}\right).
\end{align*}
Here, $a_{|min|} = \min_{i}|\E{g(\lambda^i)g(y)}|$ and $e_{max} = \max_{j,k}e_{j,k}$.
\label{eqn:estimateA}
\end{restatable}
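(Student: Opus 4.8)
The plan is to treat the \textsc{ContinuousTriplets} output as a fixed nonlinear function of three empirical second moments and to propagate their estimation error through it. First I would record the population identity that the algorithm inverts. Writing $a = \E{g(\lambda^a)g(y)}$, with $b,c$ analogous, and $e_{a,b} = \E{g(\lambda^a)g(\lambda^b)}$, conditional independence of the triplet given $Y$ gives $e_{a,b} = \E{\E{g(\lambda^a)\mid Y}\,\E{g(\lambda^b)\mid Y}}$; under the regression model of Section~\ref{sec:setup} (Gaussian sources, $\E{g(\lambda^a)\mid Y} = \rho_a\, g(Y)$ after centering) this factorizes as $e_{a,b} = \rho_a\rho_b\,\E{Y^2}$ while $a = \rho_a\E{Y^2}$. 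Eliminating $\rho_a$ yields the exact identity $a^2 = |e_{a,b}|\,|e_{a,c}|\,\E{Y^2}/|e_{b,c}|$, so that both $a$ and its estimate $\hat a$ are the \emph{same} map $f(x,y,z) = \sqrt{|x||y|\E{Y^2}/|z|}$ evaluated at the true and at the empirical moments. This reduces the theorem to (i) concentration of $\hat e_{a,b},\hat e_{a,c},\hat e_{b,c}$ and (ii) a quantitative stability bound for $f$.

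For step (i) I would bound $\E{|\hat e_{a,b}-e_{a,b}|}\le \sqrt{\Var(\hat e_{a,b})} = \sqrt{\Var\!\big(g(\lambda^a)g(\lambda^b)\big)/n}$; the products are unbounded, so Hoeffding is unavailable and I instead control the fourth moment. For the Gaussian sources, Isserlis' theorem gives $\E{g(\lambda^a)^2g(\lambda^b)^2} = e_{a,a}e_{b,b} + 2e_{a,b}^2 = O(e_{max}^2)$, hence each empirical moment has error $O(e_{max}/\sqrt n)$ in expectation and, via Chebyshev, with high probability. In the same step I would record two-sided control of magnitudes: the true cross-moments are at most $e_{max}$, while $|e_{a,b}| = |a||b|/\E{Y^2} \ge a_{|min|}^2/\E{Y^2}$ is bounded below by a multiple of $a_{|min|}^2$; on a high-probability event the empirical moments inherit $|\hat e| \ge |e|/2$, keeping the reciprocal and the square root in $f$ away from their singularities.

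For step (ii) I would use the elementary bound $|\hat a - a| = |\sqrt{\hat K}-\sqrt K| \le |\hat K - K|/(\sqrt K + \sqrt{\hat K}) \le |\hat K - K|/a_{|min|}$, where $K = |x||y|\E{Y^2}/|z|$, and then expand $|\hat K - K|$ for the ratio by a telescoping triangle inequality. Each difference factor contributes one moment error $O(e_{max}/\sqrt n)$, the surviving true factors contribute powers of $e_{max}$, and the two denominators $|\hat z|,|z|$ contribute inverse powers $a_{|min|}^{-2}$ each. Collecting these with the leading $a_{|min|}^{-1}$ from the square root produces a bound of the stated shape $\big(a_{|min|}^{-10}+a_{|min|}^{-6}\big)\sqrt{\max(e_{max}^5,e_{max}^6)/n}$, where the two separate inverse powers arise from retaining both the first-order and the second-order (cross) terms of the expansion rather than crudely bounding everything by the worst factor. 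Finally I would pass from the high-probability event to the expectation by bounding the contribution of its complement, using that products of Gaussians have finite moments of every order so the rare event is lower-order.

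The main obstacle is the non-Lipschitz behavior of $f$ near the boundary of its domain: both the division by $|z|$ and the outer square root blow up as their arguments shrink toward zero, and the only available lower bound on those arguments is of order $a_{|min|}^2$. The estimation error is therefore amplified by large inverse powers of $a_{|min|}$, and the delicate part is (a) showing the empirical moments concentrate sharply enough to stay in the region where $f$ is smooth with high probability, and (b) tracking the accumulation of the $a_{|min|}^{-1}$ and $e_{max}$ factors through the ratio and the square root without degrading the $O(n^{-1/2})$ rate. A secondary technical point is that, unlike the Boolean case, the sources are unbounded, so every concentration step must go through variance and fourth-moment control rather than bounded-difference inequalities.
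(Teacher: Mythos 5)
Your proposal follows essentially the same route as the paper's proof: the same reduction to stability of the map $(x,y,z)\mapsto\sqrt{|x||y|\E{Y^2}/|z|}$, the same square-root stability bound via $|\hat K - K|/(\sqrt{K}+\sqrt{\hat K})$ with the denominator controlled by $a_{|\min|}$, the same telescoping decomposition of the ratio of empirical moments, and the same lower bound $|e_{a,b}|\geq a_{|\min|}^2/\E{Y^2}$ driving the inverse powers. The only divergence is the concentration step: the paper bounds $\triangle_{ab}^2+\triangle_{ac}^2+\triangle_{bc}^2$ through the Frobenius and spectral norms of the $3\times 3$ moment matrix and invokes a covariance-matrix operator-norm tail bound which it integrates to get $\E{\|\hat e - e\|_2^2}=O(\sigma^4/n)$, whereas you use entry-wise variance control via Isserlis and Chebyshev — both yield the same $O(e_{\max}/\sqrt n)$ per-entry error and the same final rate.
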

The error tends to zero with rate $O(n^{-1/2})$ as expected.  

\noindent {\bf Distortion \& Inconsistency Bound}
Next, we show how to control the inconsistency in parameter estimation as a function of the distortion. We write $\theta$ to be the vector of the canonical parameters, $\theta'$ for their distorted counterparts (obtained with a consistent estimator on the embedded distances), and $\mu, \mu'$ be the corresponding mean parameters. Let $1-\varepsilon \leq {d_g(g(y),g(y'))}/{d_{\mathcal{Y}}(y, y')} \leq 1$ for all $y, y' \in \mathcal{Y} \times \mathcal{Y}$. Then, for a constant $e_{\min}$ (the value of which we characterize in the Appendix).
\begin{restatable}[]{theorem}{distbound}
The inconsistency in estimating $\theta$ is bounded as 
$  \|\theta - \theta'\| \leq \varepsilon \|\mu\| / {e_{\min}} $.
\end{restatable}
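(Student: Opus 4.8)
The plan is to split the bound into two independent pieces and then compose them: first control the gap between the true and distorted \emph{mean} parameters $\mu$ and $\mu'$ directly from the distortion hypothesis, and then transfer this gap to the \emph{canonical} parameters $\theta, \theta'$ through the smoothness of the exponential-family backward map. Since the statement is a deterministic (population-level) inequality rather than a finite-sample one, no concentration argument is needed; the work is entirely in the geometry of the map $\mu \mapsto \theta$.

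First I would bound $\|\mu - \mu'\|$. Each coordinate of the mean parameter is the expectation of a sufficient statistic, i.e.\ a distance: $\mu_a = \E{d_{\mathcal{Y}}(\lambda^a, y)}$ for an accuracy potential and $\mu_{a,b} = \E{d_{\mathcal{Y}}(\lambda^a, \lambda^b)}$ for a correlation potential, while the distorted mean parameters replace $d_{\mathcal{Y}}$ by the embedded distance $d_g(g(\cdot), g(\cdot))$. The hypothesis $1 - \varepsilon \le d_g(g(y), g(y'))/d_{\mathcal{Y}}(y, y') \le 1$ applies pointwise to each argument pair, so $(1-\varepsilon)\, d_{\mathcal{Y}} \le d_g \le d_{\mathcal{Y}}$ holds inside every expectation. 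Taking expectations under the true model preserves these inequalities, giving $(1-\varepsilon)\mu_k \le \mu'_k \le \mu_k$ for every coordinate $k$. Because distances are nonnegative we have $\mu_k \ge 0$, hence $|\mu_k - \mu'_k| \le \varepsilon\, \mu_k$ coordinatewise, and summing squares yields $\|\mu - \mu'\| \le \varepsilon \|\mu\|$.

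Next I would transfer this to $\theta$. Writing $A = \log Z$ for the log-partition function of \eqref{eq:expmodelnat}, the forward map is $\mu = \nabla A(\theta)$, with Jacobian $\nabla^2 A(\theta) = \Cov_\theta(\phi)$, the covariance matrix of the vector $\phi$ of sufficient statistics (the distances). The backward map $G = (\nabla A)^{-1}$ therefore has Jacobian $J_G = (\nabla^2 A)^{-1}$, whose operator norm equals $1/\lambda_{\min}(\Cov_\theta(\phi))$. Defining $e_{\min}$ to be a uniform lower bound on $\lambda_{\min}(\Cov_\theta(\phi))$ along the segment joining $\mu'$ to $\mu$ (the constant characterized in the Appendix), the mean value inequality for the vector-valued map $G$ gives $\|\theta - \theta'\| = \|G(\mu) - G(\mu')\| \le (1/e_{\min})\|\mu - \mu'\|$. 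Combining with the previous paragraph produces $\|\theta - \theta'\| \le \varepsilon \|\mu\| / e_{\min}$, as claimed.

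The main obstacle is justifying the backward-map step, namely that $\Cov_\theta(\phi)$ is uniformly positive definite with smallest eigenvalue at least $e_{\min}$ along the entire segment from $\mu'$ to $\mu$. This requires (i) that the segment lies in the interior of the marginal polytope, so that $G$ is well defined and continuously differentiable there, and (ii) an explicit lower bound on the Fisher information / covariance of the distance statistics, which is precisely what the Appendix computation of $e_{\min}$ must supply. Establishing a strictly positive such bound is model-dependent and is the only nontrivial analytic content; once $e_{\min} > 0$ is in hand, the rest is the elementary composition above.
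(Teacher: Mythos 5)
Your proposal is correct and follows essentially the same route as the paper: the paper bounds $\|\mu-\mu'\|\leq\varepsilon\|\mu\|$ via the coordinatewise ratio $1-\varepsilon\leq\mu'_k/\mu_k\leq 1$ exactly as you do, and obtains $\|\theta-\theta'\|\leq\|\mu-\mu'\|/e_{\min}$ by citing Lemma 8 of \cite{fu2020fast}, which is precisely the mean-value/strong-convexity argument on the backward map $(\nabla A)^{-1}$ that you spell out. Your explicit attention to the segment lying in the interior of the marginal polytope is a reasonable caveat that the paper leaves implicit in its definition of $e_{\min}$ over the set $\Theta$.
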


	\vspace{-2mm}
\section{Experiments}\label{sec:experiment}
\vspace{-2mm}
We evaluated our universal approach with four sample applications, all new to WS: learning to rank, regression, learning in hyperbolic space, and estimation in generic metric spaces given by random graphs. 

Our hypothesis is that the universal approach is capable of learning a label model and producing high-quality pseudolabels with the following properties:
\begin{itemize}[leftmargin=*,topsep=-1pt,noitemsep]\setlength\itemsep{2pt}
    \item The quality of the pseudolabels is a function of the \emph{number} and \emph{quality} of the available sources, with more high-quality, independent sources yielding greater pseudolabel quality,
    \item  Despite pseudolabels being noisy, an end model trained on \emph{more} pseudolabels can perform as well or better than a fully supervised model trained on \emph{fewer} true labels,
    \item The label model and inference procedure \eqref{eq:weightedkem} improves on the majority vote equivalent---but only when LFs are of varying quality; for LFs of similar quality, the two will have similar performance.
\end{itemize}
Additionally, we expect to improve on naive applications of existing approaches that do not take structure into account (such as using Snorkel \citep{Ratner18} by mapping permutations to integer classes).

\vspace{-5mm}
\paragraph{Application I: Rankings}
We applied our approach to obtain pseudorankings to train a downstream ranking model. We hypothesize that given enough signal, the produced pseudorankings can train a higher-quality model than using a smaller proportion of true labels. We expect our method produces better performance than the Snorkel baseline where permutations are converted into multi-class classification. We also anticipate that our inference procedure ~\eqref{eq:weightedkem}  improves on the MV baseline~\eqref{eq:mvdef} when LFs have differing accuracies.


\noindent {\bf Approach, Datasets, Labeling Functions, and End Model}
We used the isotropic simplification of Algorithm~\ref{alg:overall}. 
For inference, we applied \eqref{eq:weightedkem}. We compared against baseline fully-supervised models with only a proportion of the true labels (e.g., 20\%, 50\%, ...), the Snorkel framework \citep{Ratner18} converting rankings into classes, and majority vote~\eqref{eq:mvdef}.
We used real-world datasets compatible with multiple label types, including a movies dataset and the BoardGameGeek dataset (\citeyear{boardgamedata}) (BGG), along with synthetic data. For our movies dataset, we combined IMDb, TMDb, Rotten Tomatoes, and MovieLens movie review data to obtain features and weak labels. 
In Movies dataset, rankings were generated by picking $d=5$ film items and producing a ranking based on their tMDb average rating. In BGG, we used the available rankings.

We created both real and simulated LFs. For simulated LFs, we sampled 1/3 of LFs from less noisy Mallows model, 2/3 of LFs from very noisy Mallows model. Details are in the Appendix \ref{appendix:extended-experimental-details}. For real LFs, we built labeling functions using external KBs as WS sources based on alternative movie ratings along with popularity, revenue, and vote count-based LFs. For the end model, we used PTRanking \citep{yu2020pt}, with ListMLE \citep{xia2008listwise} loss. We report the Kendall tau distance ($d_{\tau})$.
%


\noindent {\bf Results}
\begin{figure}[h]
    \centering
    \begin{minipage}{0.49\textwidth}
        	\centering
        	\subfigure [Movies dataset]{
        \includegraphics[width=0.485\textwidth]{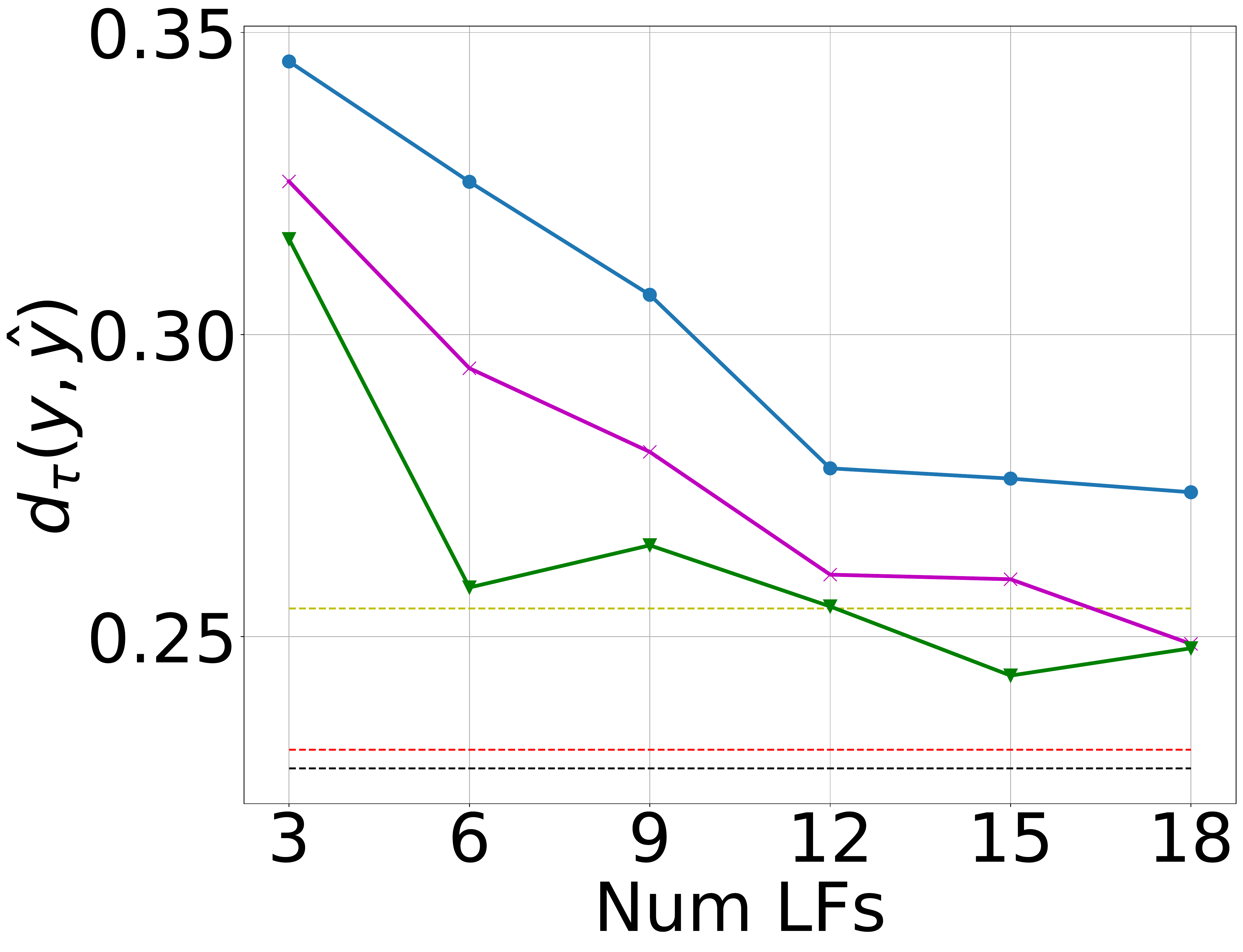}}
        \subfigure[BGG dataset]{\includegraphics[width=0.485\textwidth]{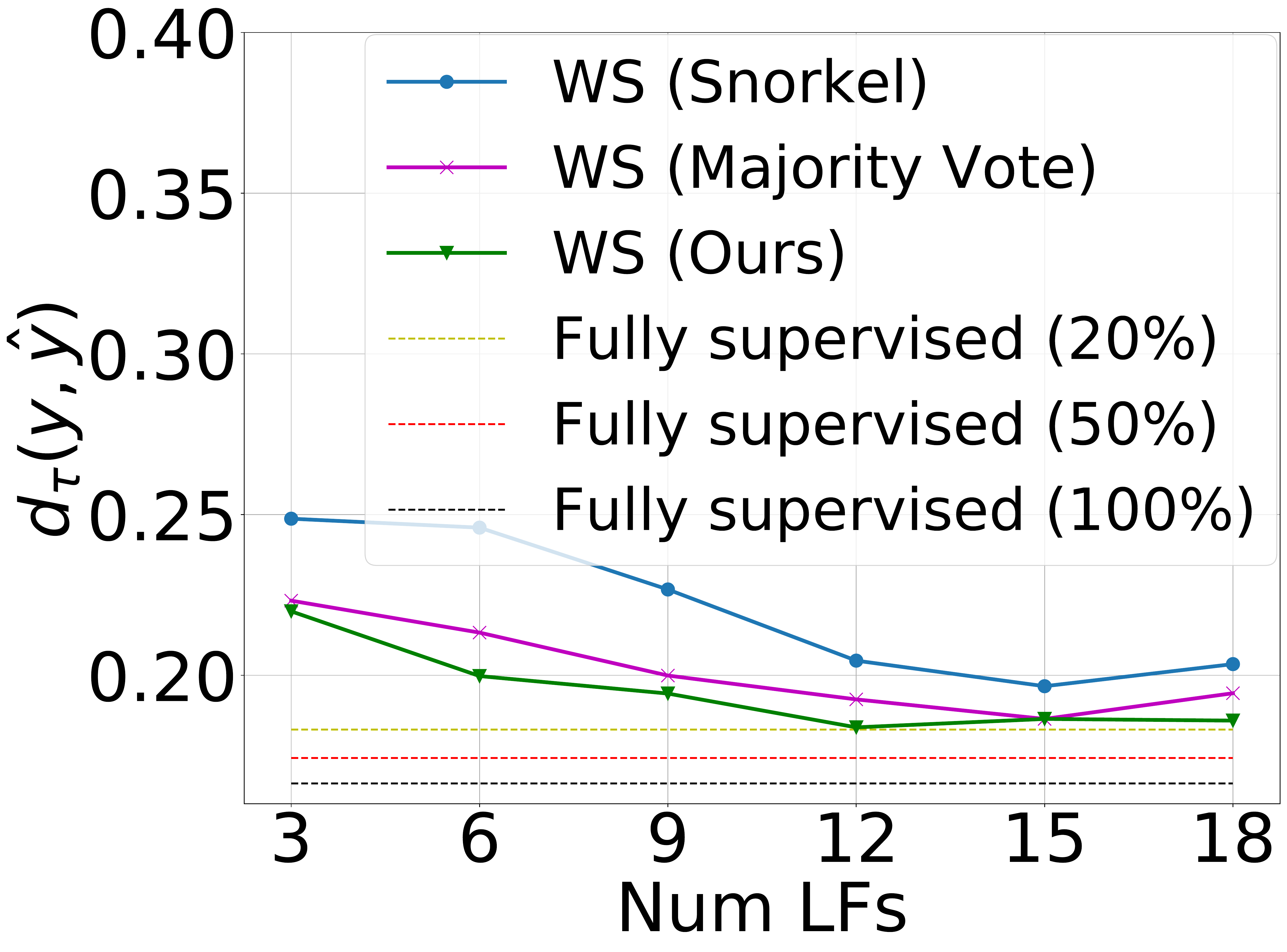}}
        \vspace{-4mm}
        \caption{ \small
        End model performance with ranking LFs (Left: Movies, Right: BGG). Training a model on pseudolabels is compared to fully-supervised baselines on varying proportions of the dataset along with the Snorkel baseline. Metric is the Kendall tau distance; lower is better.
        }
        \label{fig:ranking-real}
    \end{minipage}\hfill
    \begin{minipage}{0.49\textwidth}
        \centering
        \subfigure[Parameter Estimation Error]{\includegraphics[width=\textwidth]{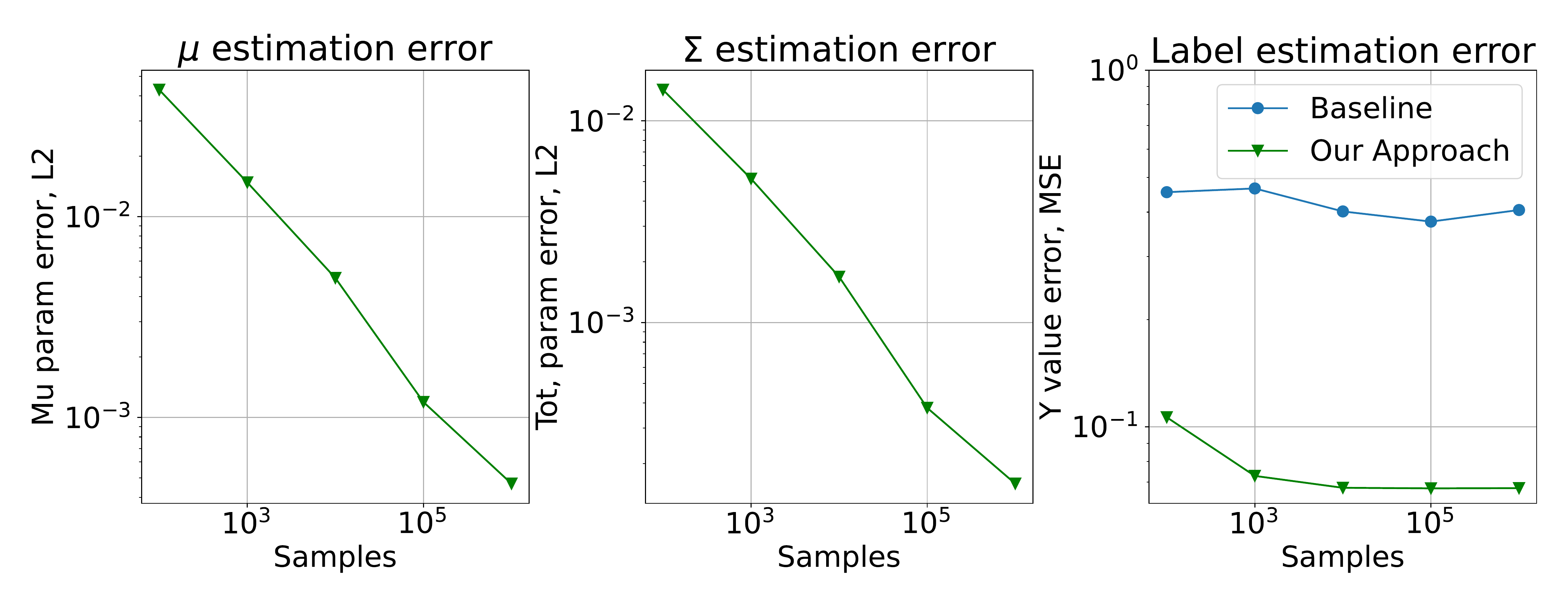}}
        \vspace{-2mm}
        \caption{ Regression: parameter estimation error (left two plots) and label estimation error comparing to majority vote baseline (rightmost) with increasing number of samples.}
        \label{fig:regression-param}

    \end{minipage}
\end{figure}
Figure \ref{fig:ranking-real} reports end model performance for the two datasets with varying numbers of simulated LFs. We observe that (i) as few as 12 LFs are sufficient to improve on a fully-supervised model trained on less data (as much as 20\% of the dataset) and that (ii) as more LFs are added, and signal improves, performance also improves---as expected. 
Crucially, the Snorkel baseline, where rankings are mapped into classes, cannot perform as well as the universal approach; it is not meant to be effective general label settings.
Table \ref{table:real_lf_movies} shows the results when using real LFs, some good, some bad, constructed from alternative ratings and simple heuristics. Alternative ratings are quite accurate: MV and our method perform similarly. However, when poorer-quality LFs are added, MV rule tends to degrade more than our proposed model, as we anticipated.

\begin{table}
\begin{center}
\small
\begin{tabular}{|c c c|c c c|}
 \hline
  Setting & $d_\tau$ & MSE & Setting & $d_\tau$  & MSE \\ [0.5ex] 
 \hline\hline
 Fully supervised (10\%) & 0.2731 & 0.3357 & \cellcolor{gray!10} WS (One LF, Rotten tomatoes) &  \cellcolor{gray!10} 0.2495 & \cellcolor{gray!10} 0.4272 \\
 Fully supervised (25\%) & 0.2465 &  0.2705 & \cellcolor{gray!25} \cellcolor{gray!10}WS (One LF, IMDb score) & \cellcolor{gray!10} 0.2289  & \cellcolor{gray!10} 0.2990  \\
 Fully supervised (50\%) & 0.2313  &  0.2399
 
 & \cellcolor{gray!10} WS (One LF, MovieLens score) &
 \cellcolor{gray!10} 0.2358 
 &  \cellcolor{gray!10} 0.2690
 \\
 Fully supervised (100\%) & 0.2282  & 0.2106 &  \cellcolor{gray!10} &  \cellcolor{gray!10} &  \cellcolor{gray!10} \\
 \hline
  \cellcolor{gray!25} WS (3 LFs, MV \eqref{eq:mvdef}) &\cellcolor{gray!25} \textbf{0.2273}
  & \cellcolor{gray!25} 0.2754  
  & \cellcolor{gray!45} WS (3 scores + 3 bad LFs, MV \eqref{eq:mvdef}) & \cellcolor{gray!45} 0.2504 
 &\cellcolor{gray!45} - \\
 \cellcolor{gray!25} WS (3 LFs, Ours) 
 & \cellcolor{gray!25} \textbf{0.2274}
 & \cellcolor{gray!25} \textbf{0.2451}
 & \cellcolor{gray!45}  WS (3 good + 3 bad LFs, Ours) &
 \cellcolor{gray!45} \textbf{0.2437}
 & \cellcolor{gray!45} -\\ 
  \hline

\end{tabular}
\caption{\small End model performance with real-world rankings and regression LFs on movies. WS (3 scores, $\cdot$) shows the result of our algorithm combining 3 LFs. In ranking, high-quality LFs perform well (and better than fewer clean labels), but mixing in lower-quality LFs hurts majority vote \eqref{eq:mvdef} more than our proposed approach. In regression, our method yields performance similar to fully-supervised with 50\% data, while outperforming MV.
}
\label{table:real_lf_movies}
\end{center}
\end{table}
\vspace{-4mm}
\paragraph{Application II: Regression}
We used universal WS in the regression setting. We expect that with more LFs, we can obtain increasingly high-quality pseudolabels, eventually matching fully-supervised baselines.

\noindent {\bf Approach, Datasets, Labeling Functions, End Model, Results}
We used Algorithm~\ref{alg:overall}, which uses the continuous triplets approach~\ref{alg:continuous}. For inference, we used the Gaussians simplification. As before, we compared against baseline fully-supervised models with a fraction of the true labels (e.g., 20\%, 50\%, ...) and MV \eqref{eq:mvdef}.
We used the Movies rating datasets with the label being the average rating of TMDb review scores across users and the BGG dataset with the label being the average rating of board games. We split data into 75\% for training set, and 25\% for the test set. For real-world LFs, we used other movie ratings in the movies dataset. Details are in the Appendix \ref{appendix:extended-experimental-details} for our synthetic LF generation procedure.
 For the end model, we used gradient boosting \citep{friedman2001greedy}. The performance metric is MSE.

Figure \ref{fig:regression-real} (Appendix) shows the end model performance with WS compared to fully supervised on the movie reviews and board game reviews datasets. We also show parameter estimation error in Figure~\ref{fig:regression-param}. As expected, our parameter estimation error goes down in the amount of available data. Similarly, our label estimator is consistent, while majority vote is not. Table \ref{table:real_lf_movies} also reports the result of using real LFs for movies. Here, MV shows even worse performance than the best individual LF - Movie Lens Score. On the other side, our label model lower MSE than the best individual LF, giving similar performance with fully supervised learning with 50\% training data.

\vspace{-5mm}
\paragraph{Application III: Geodesic Regression in Hyperbolic Space}
Next, we evaluate our approach on the problem of geodesic regression on a Riemannian manifold $M$, specifically, a hyperbolic space with curvature $K=-50$. The goal of this task is analogous to Euclidean linear regression, except the dependent variable lies on a geodesic in hyperbolic space. Further background is in the Appendix \ref{appendix:extended-experimental-details}. 

\noindent {\bf Approach, Datasets, LFs, Results} 
We generate $y_i^*$ by taking points along a geodesic (a generalization of a line) parametrized by a tangent vector $\beta$ starting at $p$, further affecting them by noise.
The objective of the end-model is to recover parameters $p$ and $\beta$, which is done using Riemannian gradient descent (RGD) to minimize the least-squares objective: $\hat{p}, \hat{\beta} = \arg \min_{q, \alpha} \sum_{i=1}^n d(\exp_q(x_i \alpha), y_i)^2$ where $d(\cdot, \cdot)$ is the hyperbolic distance. To generate each LF $\lambda_j$, we use noisier estimates of $y^{*}_i$, where the distribution is $\mathcal{N}(0, \sigma_j^2)$ and $\sigma_j^2 \sim \mathcal{U}_{[1.5 + (15 z_j), 4.5 + (15 z_j)]}$, $z_j \sim \text{Bernoulli}(0.5)$ to simulate heterogeneity across LFs; the noise vectors are parallel transported to the appropriate space. For label model learning, we used the isotropic Gaussian simplification. Finally, inference used RGD to compute \eqref{eq:weightedkem}. We include MV \eqref{eq:mvdef} as a baseline.
We compare fully supervised end models each with $n \in \{80, 90, 100\}$ labeled examples to weak supervision using only weak labels. In Figure~\ref{fig:geodesics} we see that the Fréchet mean baseline and our approach both outperform fully supervised geodesic regression despite the total lack of labeled examples. Intuitively, this is because with multiple noisier labels, we can produce a better pseudolabel than a single (but less noisy) label. As expected, our approach yields consistently lower test loss than the Fr\'{e}chet mean baseline. 

\vspace{-4mm}
\paragraph{Application IV: Generic Metric Spaces}


\begin{figure}
    \centering
    \begin{minipage}{0.45\textwidth}
        \centering
        \includegraphics[width=0.99\textwidth]{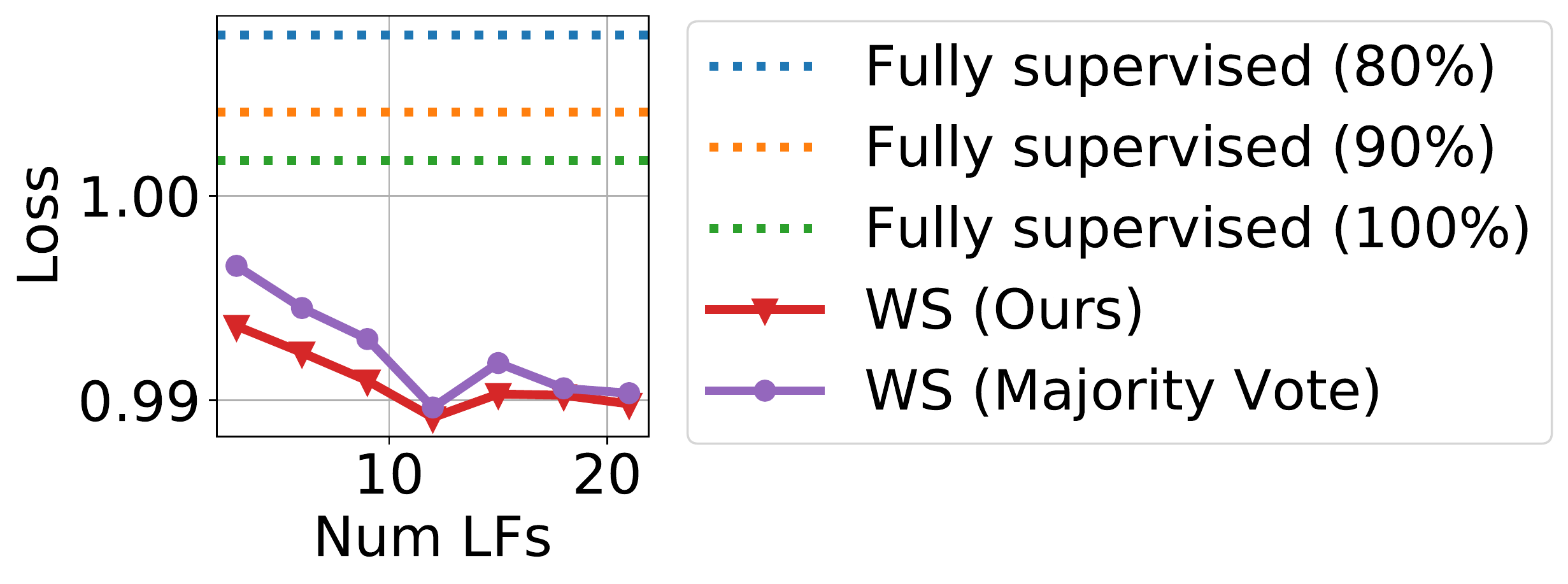}
        \vspace{-4mm}
        \caption{\small Comparison between our approach, \eqref{eq:mvdef}, and fully-supervised in geodesic regression. Metric is least-squares objective; lower is better. }
        	\label{fig:geodesics}
    \end{minipage}\hfill
    \begin{minipage}{0.5\textwidth}
        \centering
	\subfigure [Heterogeneous LFs]{
        \includegraphics[width=0.47\textwidth]{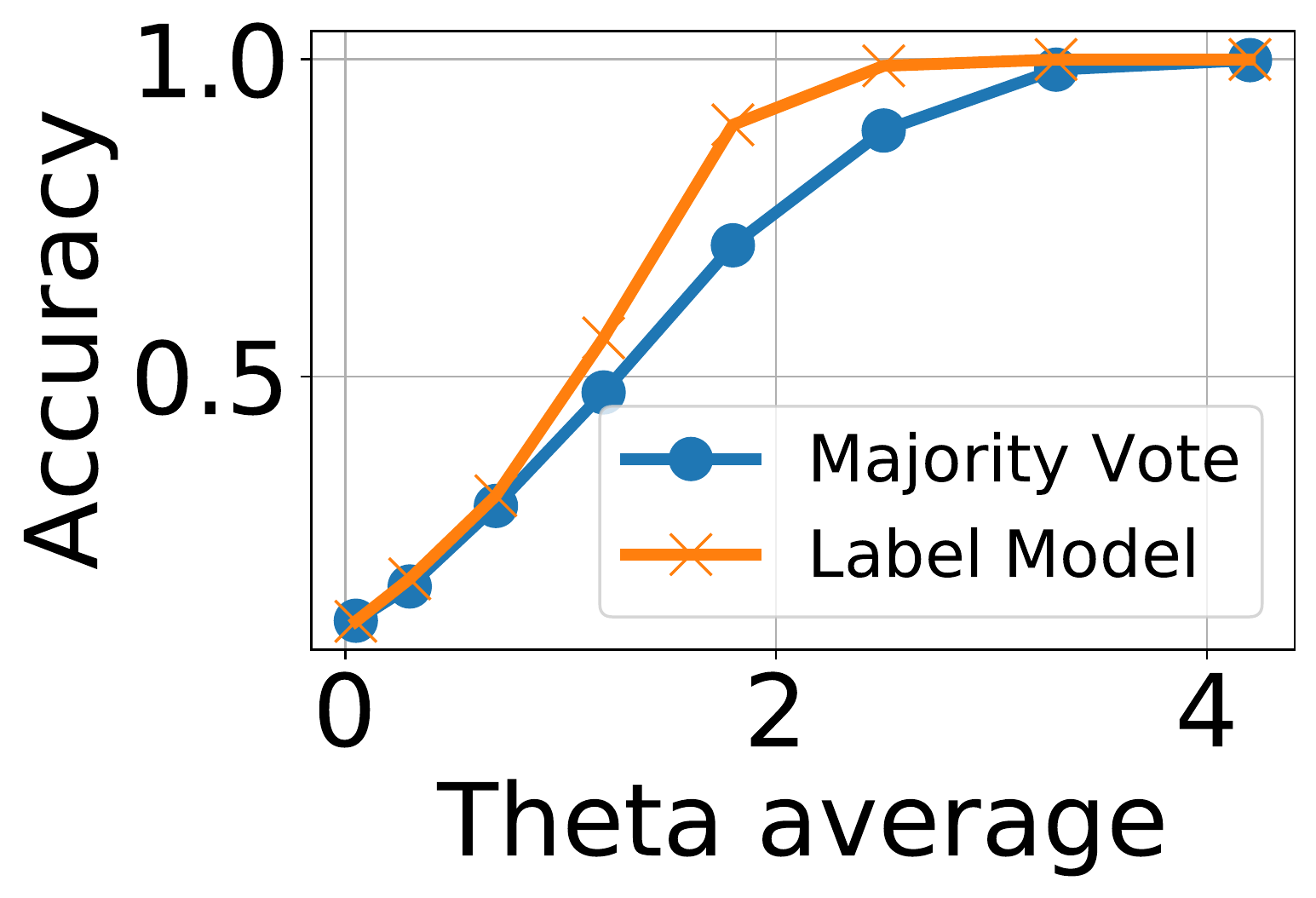}}
        \subfigure[Homogenous LFs]{\includegraphics[width=0.47\textwidth]{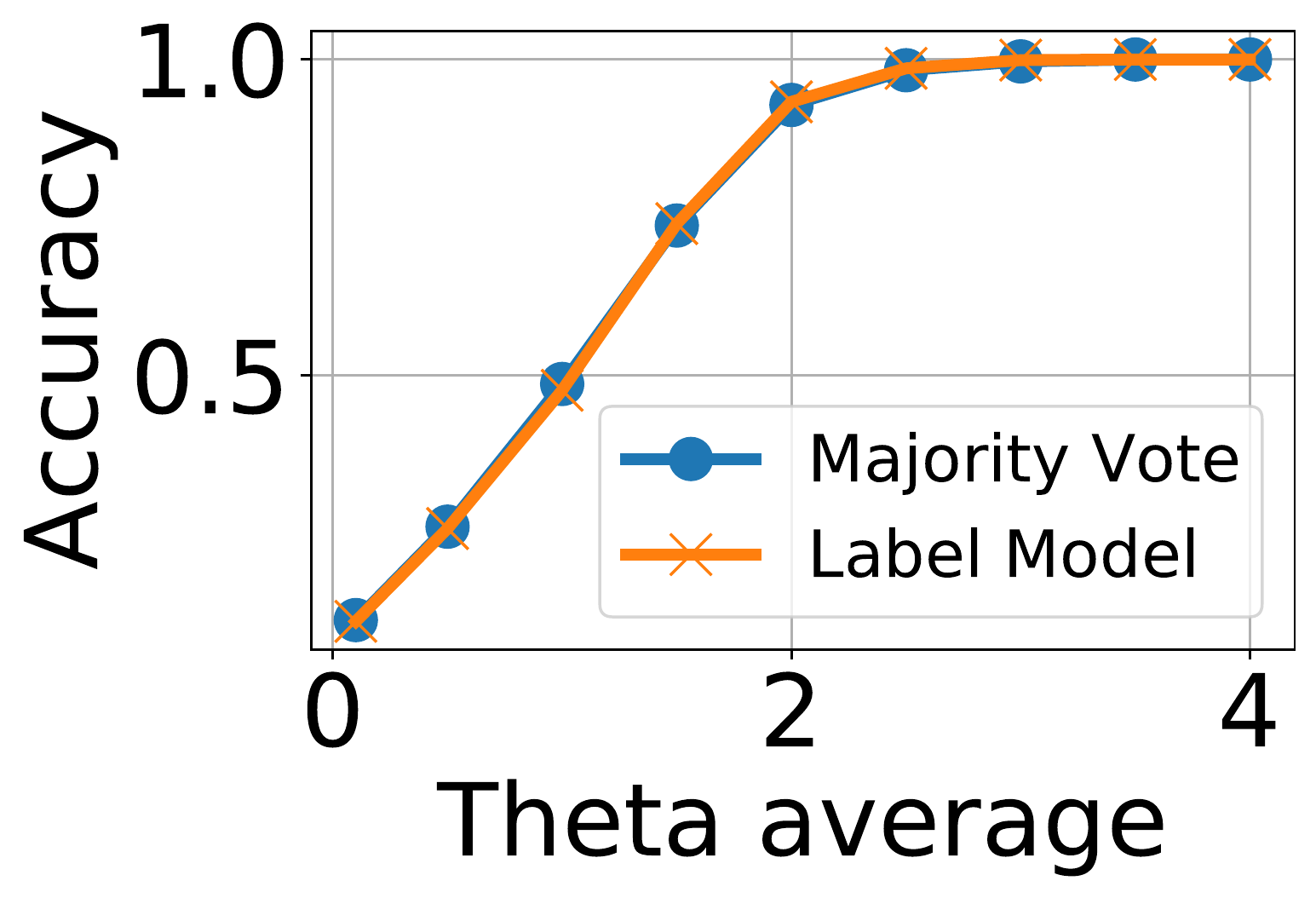}}
        \vspace{-4mm}
        \caption{\small Comparison between our label model and majority voting in generic metric space. Metric is accuracy; higher is better. }
        	\label{fig:generic-metric-space}
    \end{minipage}
\end{figure}
  \begin{wraptable}{L}{0.45\textwidth}
    \begin{minipage}{0.5\textwidth}
\small
\begin{tabular}{@{}|l|llll|@{}}
\toprule
Dataset/UAS  & $\hat{Y}$ & $\lambda^1$ & $\lambda^2$ & $\lambda^3$ \\ \midrule
cs\_pdt-ud   &   {\bf 0.873}        &    0.861         &     0.758        &      0.842       \\
en\_ewt-ud &    {\bf 0.795}       &     0.792       &     0.733       &    0.792        \\
en\_lines-ud &    {\bf 0.850}       &   0.833          &     0.847        &      0.825       \\
en\_partut-ud &   0.866        &    {\bf 0.869}         &    0.866         &      0.817       \\ \bottomrule
\end{tabular} \\
    \end{minipage}
    \vspace{-5mm}
    \caption{\small UAS scores for semantic dependency parsing. $\hat{Y}$ is synthesized from off-the-shelf parsing LFs.}
\label{table:parse}
  \end{wraptable}

We also evaluated our approach on a structureless problem---a generic metric space generated by selecting random graphs $G$ with a fixed number of nodes and edges. The metric is the shortest-hop distance between a pair of nodes. We sampled nodes uniformly and obtain LF values with \eqref{eq:expmodelnat}. Despite the lack of structure, we still anticipate that our approach will succeed in recovering the latent nodes $y$ when LFs have sufficiently high quality. We expect that the LM improves on MV~\eqref{eq:mvdef} when LFs have heterogeneous quality, while the two will have similar performance on homogeneous quality LFs. 

\noindent {\bf Approach, Datasets, LFs, Results} We generated random graphs and computed the distance matrix yielding the metric. We used Algorithm \eqref{alg:overall} with isotropic Gaussian embedding and continuous triplets. For label model inference, we used \eqref{eq:weightedkem}. Figure \ref{fig:generic-metric-space} shows results on our generic metric space experiment. As expected, when LFs have a heterogeneous quality, LM yield better accuracy than MV. However, when labeling functions are of similar quality, LM and MV give similar accuracies.

\vspace{-4mm}
\paragraph{Application V: Semantic Dependency Parsing}


We ran our technique on semantic dependency parsing tasks, using datasets in English and Czech from the Universal Dependencies collection \citep{udeps}. The LFs are off-the-shelf parsers from Stanza \citep{qi2020stanza} trained on different datasets in the same language. We model a space of trees with a metric given by the $\ell_2$ norm on the difference between the adjacency matrices. We measure the quality of the synthesized tree $\hat{Y}$ with the unlabeled attachment scores (UAS). Our results are shown in Table~\ref{table:parse}. As expected, when the parsers are of different quality, we can obtain a better result.

	\vspace{-2mm}
\section{Conclusion}
\label{sec:conc}
\vspace{-2mm}
Weak supervision approaches allow users to overcome the manual labeling bottleneck for dataset construction. While successful, such methods do not apply to each potential label type. We proposed an approach to universally apply WS, demonstrating it for three applications new to WS: rankings, regression, and learning in hyperbolic space. We hope our proposed technique encourages applying WS to many more applications.

\newpage


\subsubsection*{Acknowledgments}
We are grateful for the support of the NSF under CCF2106707 (Program Synthesis for Weak Supervision) and the Wisconsin Alumni Research Foundation (WARF).

\bibliography{iclr2022_conference}
\bibliographystyle{iclr2022_conference}

\newpage
\appendix

The appendix contains additional details, proofs, and experimental results. The glossary contains a convenient reminder of our terminology (Appendix \ref{appendix:glossary}). We provide a detailed related work section, explaining the context for our work (Appendix \ref{appendix:related-work}). Next, we give the statement of the quadratic triplets algorithm (Appendix \ref{appendix:extended-algorithmic-details}). Afterwards, we give additional theoretical details: more details on the backward mapping, along with an extended discussion on the rankings problem. We continue with proofs of Theorem~\ref{thm:estimateTheta} and Theorem~\ref{eqn:estimateA} (Appendix \ref{appendix:additional-theory-details}). Finally, we give more details on the experimental setup (Appendix \ref{appendix:extended-experimental-details}) and additional experimental results including partial rankings (Appendix \ref{appendix:additional-synthetic-LF}, \ref{appendix:additional-real-LF}).

\appendix
\section{Glossary} \label{appendix:glossary}
\label{sec:gloss}
The glossary is given in Table~\ref{table:glossary} below.
\begin{table*}[h]
\centering
\begin{tabular}{l l}
\toprule
Symbol & Definition \\
\midrule
$\mathcal{X}$ & feature space \\
$\mathcal{Y}$ & label metric space\\
$d_{\mathcal{Y}}$ & label metric\\
$x_1,x_2, \ldots, x_n$ & unlabeled datapoints from $\mathcal{X}$\\
$y_1, y_2, \ldots, y_n$ & latent (unobserved) labels from $\mathcal{Y}$\\
$s^1, s^2, \ldots, s^m$ & labeling functions / sources \\
$\lambda^1, \lambda^2, \ldots, \lambda^m$ & output of labeling functions \\
$n$ & number of data points \\
$m$ & number of LFs \\
$\lambda^a(i)$ & output of $a$th labeling function applied to $i$th sample $x_i$\\
$\theta_a, \theta_{a,b}$ & canonical parameters in model \eqref{eq:expmodelnat}\\
$\mathbb{E}[d_a(\lambda^a, y)], \mathbb{E}[d_a(\lambda^a, \lambda^b)]$ & mean parameters in \eqref{eq:expmodelnat}\\
$g$ & injective mapping $g: \mathcal{Y} \rightarrow \mathbb{R}^d$ or $\{ \pm 1\}^d$\\
$\rho$ & number of items in ranking setting \\
$e_{a,b} $& $E[g(\lambda^a)_ig(\lambda^b)_i]$\\
$O_{a,b}$ & $P(g(\lambda^a)_i=1,g(\lambda^b)_i=1)$\\
$l_a$ & $P(g(\lambda^a)_i =1)$\\
$S_\rho$ & symmetric group on $\{1,...,\rho\}$\\
$\pi$ & permutation $\in S_\rho$\\
$d_{\tau}(\cdot,\cdot)$ & Kendall tau distance on permutations \citep{Kendall38}\\

\toprule
\end{tabular}
\caption{
	Glossary of variables and symbols used in this paper.
}
\label{table:glossary}
\end{table*}
\section{Related Work} \label{appendix:related-work}
\paragraph{Weak Supervision}

Existing weak supervision frameworks, starting with \cite{Ratner16}, select a particular model for the joint distribution among the sources and the latent true label, and then use the properties of the distribution to select an algorithm to learn the parameters. In \cite{Ratner16}, a factor model is used and a Gibbs-sampling based optimizer is the algorithm of choice. In \cite{Ratner18}, the model is a discrete Markov random field (MRF), and in particular, an Ising model. The algorithm used to learn the parameters solves a linear system obtained from a component of the inverse covariance matrix. In \cite{sala2019multiresws} and \cite{fu2020fast}, the requirement to use the inverse covariance matrix is removed, and a set of systems among as few as three sources are used instead. These systems have closed-form solutions. All of these models could be represented within our framework. The quadratic triplets idea is described, but not analyzed, in the appendix of \cite{fu2020fast}, applied to just the particular case of Ising models with singleton potentials. Our work uses these ideas and their extensions to the general setting, and provides theoretical analyses for the two sample applications we are interested in.

All of these papers refer to weak supervision frameworks; all of them permit the use of various types of information as labeling functions. For example, labeling functions might include crowdworkers \cite{karger2011iterative}, distant supervision \cite{mintz2009distant}, co-training \cite{blum1998combining}, and many others. Note that works like \cite{Dehghani2017} provide one type of weak supervision signal for rankings and can be used as a high-quality labeling function in our framework for the rankings case. That is, our work can integrate such sources, but does not directly compete with them.

The idea of learning notions like source accuracy and correlations, despite the presence of the latent label, is central to weak supervision. It has been shown up in other problems as well, such as crowdsourcing ~\cite{dawid1979maximum, karger2011iterative} or topic modeling \cite{anandkumar2014tensor}. Early approaches use expectation maximization (EM), but in the last decade, a number of exciting approaches have been introduced based on the method of moments. These include the tensor power method approach of \cite{anandkumar2014tensor} and its follow-on work, the explicitly crowdsourcing setting of \cite{joglekar2013evaluating, joglekar2015comprehensive}, and the graphical model learning procedures of \cite{chaganty2014estimating, raghunathan2016estimation}. Our approach can be thought of as an extension of some of these approaches to the more general setting we consider.

\paragraph{Comparison With Existing Label Models}
There are several existing label models; these closely resemble \eqref{eq:expmodelnat} under particular specializations. For example, one of the models used for binary labels in \cite{Ratner19, Varma19} is the Ising model for $\lambda^1, \ldots, y \in \{\pm 1\}$
\begin{align}
p(\lambda^1, \ldots, \lambda^m, y) = \frac{1}{Z} \exp\big(\sum_{a=1}^m \theta_a \lambda^a y + \sum_{(a,b) \in E} \theta_{a,b} \lambda^a\lambda^b  + \theta_Y y \big).
\label{eq:ising}
\end{align}
A difference is that this is a joint model; it assumes $y$ is part of the model and adds a singleton potential prior term to it. Note that this model promotes agreement $\lambda^a y$ rather than penalizes disagreement $-\lambda^a d_{\mathcal{Y}}(\lambda^a, y)$, but this is nearly equivalent.

\paragraph{Rankings and Permutations}
One of our chief applications is to rankings. There are several classes of distributions over permutations, including the Mallows model \citep{mallows57God} and the Plackett-Luce model \citep{Plackett75}. We are particularly concerned with the Mallows model in this work, as it can be extended naturally to include labeling functions and the latent true label. Other generalizations include the generalized Mallows model \citep{Marden14} and the Mallows block model \citep{optimalMallows2019}. These generalizations, however, do not cover the heterogenous accuracy setting we are interested in. 

A common goal is to learn the parameters of the model (the single parameter $\theta$ in the conventional Mallows model) and then to learn the central permutation that the samples are drawn from. A number of works in this direction include \cite{caragiannis2016, Mukherjee16, optimalMallows2019}. Our work extends results of this flavor to the generalization on the Mallows case that we consider. In order to learn the center permutation, estimators like the Kemeny rule (the procedure we generalize in this work) are used. Studies of the Kemeny rule include \cite{Kenyon-Mathieu07, caragiannis2016}.

\section{Extension to mixed label types} \label{appendix:extended-algorithmic-details}

In the body of the paper, we discussed models for tasks where only one label type is considered. As a simple extension, we can operate with multiple label types. For example, this might include a classification task where  weak label sources give their output as class labels and also might provide confidence values; that is, a pair of label types that include a discrete value and a continuous value. The main idea is to construct a finite product space with individual label types. Suppose there are $k$ possible label types, i.e. $\mathcal{Y}_{1}, \mathcal{Y}_{2}, \cdots, \mathcal{Y}_{k}$. We construct the label space by the Cartesian product $$\mathcal Y=\mathcal{Y}_{1} \times \mathcal{Y}_{2} \times \cdots \times \mathcal{Y}_{k}.$$ All that is left is to define the distance: \[d^2_{\mathcal{Y}}(y_{1}, y_{2}) = \sum_{i=1}^{k} d^2_{\mathcal{Y}_{i}}(\text{proj}_{i}(y_{1}), \text{proj}_{i}(y_{2})),\] where $\text{proj}_{i}$ is projection onto the $i$-th factor space. Then, using this combination, we extend the exponential family model \eqref{eq:expmodelnat}, yielding
\begin{align*}
p(\lambda^1, \ldots, \lambda^m | y)
&= \frac{1}{Z} \exp\big(\sum_{a=1}^{m} \sum_{i=1}^{k} - \theta^{(i)}_{a} d_{\mathcal{Y}_{i}}(\text{proj}_{i}(\lambda^a), \text{proj}_{i}(y)) 
\\ & \qquad \qquad + \sum_{(a,b) \in E} \sum_{i=1}^{k} -\theta^{(i)}_{a,b} d_{\mathcal{Y}_{i}}(\text{proj}_{i}(\lambda^a), \text{proj}_{i}(\lambda^b))  \big).
\end{align*}

We can learn the parameters $\theta^{(i)}_{a}, \theta^{(i)}_{a, b}$ by Algorithm \ref{alg:overall} using the same approach. Similarly, the inference procedure \eqref{eq:weightedkem} can be extended to $$\hat{y}_{j} = \argmin_{z \in \mathcal{Y}} \sum_{a=1}^{m} \sum_{i=1}^{k} d_{\hat{\theta}^{(i)}_a}(\text{proj}_{i}(\lambda^a(j)), \text{proj}_{i}(z)).$$

There are two additional aspects worth mentioning. First, the user may wish to consider the scale of distances in each label space, since the scale of one of the factor spaces' distance might be dominant. To weight each label space properly, we can normalize each label space's distance. Second, we may wish to consider the abstention symbol as an additional element in each space. This permits LFs to output one or another type of label without necessarily emitting a full vector. This can also be handled; the underlying algorithm is simply modified to permit abstains as in \cite{fu2020fast}.

\section{Universal Label Model For Isotropic Gaussian Embedding} 
\label{appendix:isotropic}
We illustrate the isotropic Gaussian version of the label model. While simple, it captures all of the challenges involved in label model learning, and it performs well in practice. The steps are shown in Algorithm~\ref{alg:isoapprox}.

Why does Algorithm~\ref{alg:isoapprox} obtain the estimates of $\theta$ without observing the true label $y$? To see this, first, note that post-embedding, the model we are working with is given by 
\begin{align*}
p(\lambda^1, \ldots, \lambda^m | y) = \frac{1}{Z} \exp\big(\sum_{a=1}^m -\theta_a \|g(\lambda^a)-g(y)\|^2 + \sum_{(a,b) \in E} -\theta_{a,b} \|g(\lambda^a) - g(\lambda^b)\|^2  \big).
\end{align*}
If the embedding is a bijection, the resulting model is indeed is a multivariate Gaussian. Note that the term ``isotropic'' here refers to the fact that for $d > 1$, the covariance term for the random vector $\lambda^a$ is a multiple of $I$. 

\begin{algorithm}[t]
	\caption{Isotropic Gaussian Label Model Learning }
	\begin{algorithmic}[1]
		\STATE \textbf{Input:}
		Output of labeling functions $\lambda^a(i)$, correlation set $E$
		\FOR{$a \in \{1,2,\ldots, m\}$}
			\FOR{$b \in \{1,2,\ldots, m\} \setminus a$}
			\STATE \textbf{Estimate Correlations:}$\forall i,j$,
			$\Ehat{d(\lambda^a, \lambda^b)}  \leftarrow \frac{1}{n} \sum_{t=1}^n d(\lambda^a(i), \lambda^b(i))$
			\ENDFOR
			\STATE \textbf{Estimate Accuracy:} Pick $b, c: (a,b) \not\in E, (a,c) \not\in E, (b,c)\not\in E$ $\Ehat{d(\lambda^a, y)} \leftarrow 1/2(\E{d(\lambda^a, \lambda^b)} + \E{d(\lambda^a, \lambda^c)} - \E{d(\lambda^b, \lambda^c)})$
		    \STATE Form estimated covariance matrix $\hat{\Sigma}$ from accuracies and correlations; Compute $\hat{\theta} \leftarrow \hat{\Sigma}^{-1}$
		\ENDFOR
		\RETURN $\hat{\theta}_a, \hat{\theta}_{a,b}$
	\end{algorithmic}
	\label{alg:isoapprox}
\end{algorithm}

Now, observe that if $(a,b) \not\in E$, we have that $\E{\|\lambda^a - \lambda^b\|^2} = \E{\|(\lambda^a-y) - (\lambda^b-y)\|^2} = \E{\|\lambda^a-y\|^2} + \E{\|\lambda^b-y\|^2}$. Note that we can estimate the left-hand side $\E{\|\lambda^a - \lambda^b\|^2}$ from samples, while the right-hand side contains two of our accuracy terms. We can then form two more equations of this type (involving the pairs $a,c$ and $b,c$) and solve the resulting linear system. Concretely, we have that 
\begin{align*}
\E{\|g(\lambda^a) - g(\lambda^b)\|^2} &= \E{\|g(\lambda^a)-y\|^2} + \E{\|g(\lambda^b)-y\|^2} \\
\E{\|g(\lambda^a) - g(\lambda^c)\|^2} &= \E{\|g(\lambda^a)-y\|^2} + \E{\|g(\lambda^c)-y\|^2} \\
\E{\|g(\lambda^b) - g(\lambda^c)\|^2} &= \E{\|g(\lambda^b)-y\|^2} + \E{\|g(\lambda^c)-y\|^2}. \\
\end{align*}
To obtain the estimate of $\E{\|g(\lambda^a)-y\|^2}$, we add the first two equations, subtract the third, and divide by two.This produces the accuracy estimation step in Algorithm~\ref{alg:isoapprox}. To obtain the estimate of the canonical parameters $\theta$, it is sufficient to invert our estimate of the covariance matrix. In practice, note that we need not actually compute an embedding; we can directly work with the original metric space distances by implicitly assuming that there exists an isometric embedding. 

\section{Additional Algorithmic Details} \label{appendix:extended-algorithmic-details}

\paragraph{Quadratic Triplets} We give the full statement of Algorithm~\ref{alg:quadratic}, which is the accuracy estimation algorithms for the case $Im(g)=\{\pm1\}^{d}$.

\paragraph{Inference simplification in $\mathbb{R}$}
For the simplification of inference in the isotropic Gaussian embedding, we do not even need to recover the canonical parameters; it is enough to use the mean parameters estimated in the label model learning step. For example, if $d=1$, we can write these accuracy mean parameters as $\Ehat{g(\lambda^i) g(y)}$ and put them into a vector $\Ehat{\Sigma}_{\Lambda y}$. The correlations can be placed into the sample covariance matrix $\hat{\Sigma}_{g(\lambda^1), \ldots, g((\lambda^m)}$. Then,  
\begin{align}
\hat{y}(i) := \E{y | \lambda^1(i), \ldots \lambda^m(i)} = \hat{\Sigma}_{\Lambda y}^T \hat{\Sigma}_{\Lambda}^{-1}[\lambda^1 (i), \ldots, \lambda^m(i)].
\label{eq:continf}
\end{align}
This is simply the mean of a conditional (scalar) Gaussian.

\begin{algorithm}[t]
	\caption{\textsc{QuadraticTriplets}}
	\begin{algorithmic}
		\STATE \textbf{Input:}
        Estimates $O_{a,b}, O_{a,c}, O_{b,c}, \ell_a, \ell_b, \ell_c$, prior $p$, index $i$
        \FOR{$y$ in $\mathcal{Y}$}
        \STATE Obtain probability $p' = P(Y=y)$ from prior $p$
        \STATE Set $\beta \leftarrow (O_{a,b}(1-p') + (\ell_a-p z) \ell_b)/(p' z - p' \ell_a)$
        \STATE Set $\gamma \leftarrow (O_{a,c}(1-p') + (\ell_a-p z) \ell_c)/(p' z - p' \ell_a)$
        \STATE Solve quadratic $(p\beta \gamma + \ell^b \ell^c -p \beta \ell^c - p \gamma \ell^b- O_{a,b}(1-p))(p'\alpha-p'\ell_a)^2 = 0$ in $z$
        \STATE $\hat{P}(g(\lambda^a)_i | Y=y) \leftarrow z$
        \STATE $\hat{P}(g(\lambda^b)_i | Y=y) \leftarrow (O_{a,b}(1-p') + (\ell_a-p \hat{P}(g(\lambda^a)_i | Y=y)) \ell_b)/(p' \hat{P}(g(\lambda^a)_i | Y=y) - p' \ell_a) $
        \STATE $\hat{P}(g(\lambda^c)_i | Y=y) \leftarrow (O_{a,c}(1-p') + (\ell_a-p \hat{P}(g(\lambda^a)_i | Y=y)) \ell_c)/(p' \hat{P}(g(\lambda^a)_i | Y=y) - p' \ell_a)$
        \ENDFOR 
		\RETURN Accuracies $\hat{P}(g(\lambda^a)_i | Y=y) , \hat{P}(g(\lambda^b)_i | Y=y) , \hat{P}(g(\lambda^c)_i | Y=y)  \;$
	\end{algorithmic}
	\label{alg:quadratic}
\end{algorithm}

\section{Additional Theory Details} \label{appendix:additional-theory-details}
We discuss further details for several theoretical notions. 
The first provides more details on how to obtain the canonical accuracy parameters $\theta_a$ from the mean parameters $\E{d_\tau(\lambda^a, y)}$ in the rankings case. The second involves a discussion of learning to rank problems, with additional details on the weighted estimator \eqref{eq:weightedkem}.

\paragraph{More on the backward mapping for rankings}
We note, first, that the backward mapping is quite simple for the Gaussian cases in $\mathbb{R}^d$: it just involves inverting the mean parameters. As an intuitive example, note that the canonical parameter in the interior of the multivariate Gaussian density is $\Sigma^{-1}$, the inverse covariance matrix. The more challenging aspect is to deal with the discrete setting, and, in particular, its application to rankings. We do so below.

We describe how we can recover the canonical accuracy parameter $\theta_a$ for a label function $\lambda^a$ given accuracy estimates $P(g(\lambda^a)_i=1|Y=y)$ for all $y \in \mathcal{Y}.$ By equation (1), the marginal distribution of $\lambda^a$ is specified by \begin{equation}p(\lambda^a)=\frac{1}{Z}\exp(-g(\lambda^a)^T\theta_ag(y)).\end{equation} Since this is an exponential model, it follows from \cite{Fligner} that \begin{equation}\label{expD}\mathbb{E}[D] = \frac{d\log(M(t))}{dt}\bigg|_{t=-\theta_a},\end{equation} where $D = \sum_{i}\ind\{g(\lambda^a)_i\neq g(y)_i\}$ and $M(t)$ is the moment generating function of $D$ under (5). $\mathbb{E}[D]$ can be easily estimated from the accuracy parameters obtained from the triplet algorithm, and the inverse of (6) can then be solved for. For instance, in the rankings case, it can be shown \citep{Fligner} that
\[M(-\theta) = \frac{1}{\rho!}Z(\theta).\] Additionally, we have that the partition function satisfies \citep{hiddenperm} \[Z(\theta) = \prod_{j\leq k}\frac{1-e^{\theta j}}{1-e^\theta}.\] It follows that \begin{equation}\label{moment} M(t) = \frac{1}{k!}\prod_{j\leq k}\frac{1-e^{\theta j}}{1-e^\theta}.\end{equation} Using this, we can then solve for (6) numerically.

\paragraph{Rank aggregation and the weighted Kemeny estimator} 
Next, we provide some additional details on the learning to rank problem. The model \eqref{eq:expmodelnat} without correlations can be written
\begin{align}p(\lambda^1, \ldots, \lambda^m | y) = \frac{1}{Z} \exp\left( \sum_a -\theta_a d_\tau(\lambda^a, y) \right).
\label{eq:semimallows}
\end{align}
Thus, if we only have one labeling function, we obtain the Mallows model \citep{mallows57God} for permutations, whose standard form is $p(\lambda^1|y)=1/Z \exp(-\theta_1 d_{\tau}(\lambda^1, y))$. 
Permutation learning problems often use the Mallows model and its relatives. The permutation $y$ (called the \emph{center}) is fixed and $n$ samples of $\lambda^1$ are drawn from the model. The goal is to (1) estimate the parameter $\theta_1$ and (2) estimate the center $y$. This neatly mirrors our setting, where (1) is label model learning and (2) is the inference procedure. 

However, our problem is harder in two ways. While we do have $n$ samples from each marginal model with parameter $\theta_i$, these are for different centers $y_1, \ldots, y_n$---so we cannot aggregate them or directly estimate $\theta_i$. That is, for LF $a$, we get to observe $\lambda^{a}(1), \lambda^a(2), \ldots, \lambda^a(n)$. However, these all come from different conditional models $P(\cdot | y_i)$, with a different $y_i$ for each draw. In the uninteresting case where the $y_i$'s are identical, we obtain the standard setting. On the other hand, we do get to observe $m$ views (from the LFs) of the same permutation $y_i$. But, unlike in standard rank aggregation, these do not come from the same model---the $\theta_a$ accuracy parameters differ in \eqref{eq:semimallows}. However, if $\theta_a$ is identical (same accuracy) for all LFs, we get back to the standard case. Thus we can recover the standard permutation learning problem in the special cases of identical labels or identical accuracies---but such assumptions are unlikely to hold in practice. 

Note that our inference procedure~\eqref{eq:weightedkem} is the weighted version of the standard Kemeny procedure used for rank aggregation. Observe that this is the maximum likelihood estimator on the model ~\eqref{eq:expmodelnat}, specialized to permutations.
This is nearly immediate: maximizing the linear combination (where the parameters are weights) produces the smallest negative term on the inside of the exponent above. 

Next, we note that it is possible to show that the sample complexity of learning the permutation $y$ using this estimator is still $\Theta(\log(m / \varepsilon))$ by using the pairwise majority result in \cite{caragiannis2016}.

Finally, a brief comment on computational complexity: it is known that finding the minimizer of the Kemeny rule (or our weighted variant) is NP-hard \citep{Davenport04}. However, there are PTAS available for it \citep{Kenyon-Mathieu07}. In practice, our permutations are likely to be reasonably short (for example, in our experiments, we typically use length 5) so that we can directly perform the minimization. In cases with longer permutations, we can rely on the PTAS.

\section{Proof of Theorems}
Next we give the proofs of the proposition and our two theorems, restated for convenience.

\subsection{Distortion Bound}
Our result that captures the impact of distortion on parameter error is
\distbound*

Before we begin the proof, we restate, in greater detail, some of our notation. We write $p_{\theta; d_{\mathcal{Y}}}$ for the true model
\[ p_{\theta; d_{\mathcal{Y}}}(\lambda^1, \ldots, \lambda^m | y) =  \frac{1}{Z} \exp\big(\sum_{a=1}^m -\theta_a d_{\mathcal{Y}}(\lambda^a, y) +\sum_{(a,b) \in E} -\theta_{a,b} d_{\mathcal{Y}}(\lambda^a, \lambda^b) \big) \]
and $p_{\theta'; d_{g}}$ for models that rely on distances in the embedding space
\[ p_{\theta; d_{g}}(\lambda^1, \ldots, \lambda^m | y) =  \frac{1}{Z} \exp\big(\sum_{a=1}^m -\theta'_a d_{g}(g(\lambda^a), g(y)) +\sum_{(a,b) \in E} -\theta'_{a,b} d_{g}(g(\lambda^a), g(\lambda^b)) \big). \]
We also write $\theta$ to be the vector of the canonical parameters $\theta_{a}$ and $\theta_{a,b}$ in $p_{\theta; d_{\mathcal{Y}}}$ and $\theta'$ to be its counterpart for $p_{\theta'; d_{g}}$. Let $\mu$ be the vector of mean parameters whose terms are $\mathbb{E}[d_{\mathcal{Y}}(\lambda^a, y)]$ and $\mathbb{E}[d_{\mathcal{Y}}(\lambda^a, \lambda^b)]$. Similarly, we write $\mu'$ for the version given by $p_{\theta'; d_{g}}$. 
Let $\Theta$ be a subspace so that $\theta, \theta' \in \Theta$. Since our exponential family is minimal, we have that the log partition function $A(\tilde{\theta})$ has the property that $\nabla^2 A(\tilde{\theta})$ is the covariance matrix and is positive definite. Suppose that the smallest eigenvalue of $\nabla^2 A(\tilde{\theta})$ for $\tilde{\theta} \in \Theta$ is $e_{\min}$. 

For our embedding, suppose we first normalize (ie, divide by a constant) our embedding function so that
\[\frac{d_g(g(y),g(y'))}{d_{\mathcal{Y}}(y, y')} \leq 1\]
for all $y, y' \in \mathcal{Y} \times \mathcal{Y}$. In other words, our embedding function $g$ never expands the distances in the original metric space, but, of course, it may compress them. The distortion measures how bad the compression can be, Let $\varepsilon$ be the smallest value so that
\[1-\varepsilon \leq \frac{d_g(g(y),g(y'))}{d_{\mathcal{Y}}(y, y')} \leq 1\] 
for all $y, y' \in \mathcal{Y} \times \mathcal{Y}$. If $g$ is isometric, then we obtain $\varepsilon = 0$. On the other hand, as $\varepsilon$ approaches 1, the amount of compression can be arbitrarily bad.

\begin{proof}
We use Lemma 8 from \cite{fu2020fast}. It states that
\begin{align*}
    \|\theta - \theta'\| &\leq \frac{1}{e_{\min}} \|\mu - \mu'\| \\
    &\leq  \frac{\varepsilon}{e_{\min}} \|\mu\|.
\label{eq:betpar}
\end{align*}
To see the latter, we note that $\|\mu - \mu'\| \leq \|\mu(1 - \frac{\mu'}{\mu})\| \leq \|\mu\| (1 - (1-\varepsilon)) = \|\mu\|\varepsilon$, where $\frac{\mu'}{\mu}$ is the element-wise ratios, and where we applied our distortion bound.
\end{proof}

\subsection{Binary Hypercube Case}
\firstLemma*
\begin{proof}
Define \[\mu^a_i= \begin{bmatrix}P(g(\lambda^a)_i=1 |Y=y) & P(g(\lambda^a)_i=1 |Y\neq y) \\ 
P(g(\lambda^a)_i=-1 |Y=y) & P(g(\lambda^a)_i=-1 |Y\neq y)\end{bmatrix}, \quad P = \begin{bmatrix}P(Y=y) & 0\\ 0 & P(Y\neq y)\end{bmatrix},\] and
\[O^{ab}_i = \begin{bmatrix}P(g(\lambda^a)_i=1,g(\lambda^b)_i=1) & P(g(\lambda^a)_i=1,g(\lambda^b)_i=-1) \\ P(g(\lambda^a)_i=-1,g(\lambda^b)_i=1) & P(g(\lambda^a)_i=-1,g(\lambda^b)_i=-1)\end{bmatrix}.\]
By conditional independence, we have that \begin{equation}\mu^a_iP(\mu^b_i)^T=O^{ab}_i.\label{eq:quad-cond-indep}\end{equation} Note that we can express \[P(g(\lambda^a)_i=1|Y\neq y) = \frac{P(g(\lambda^a)_i=1)}{P(Y\neq y)} - \frac{P(g(\lambda^a)_i=1|Y=y)P(Y=y)}{P(Y \neq y)}.\]We can therefore rewrite the top row of $\mu_i^a$ as $[\alpha_i, q_i^a-r\alpha_i]$ where $q_i^a = \frac{P(g(\lambda^a)_i=1)}{P(Y\neq y)}$ and $r = \frac{P(Y=y)}{P(Y \neq y)}.$ After estimating the entries of $O_i$'s and $q_i$'s, we consider the top-left entry of \eqref{eq:quad-cond-indep} for every pair of $a,b$ and $c,$ to get the following system of equations
\begin{align*}
&t\alpha_i\beta_i + \hat{q}_i^a\hat{q}_i^b-\hat{q}_i^ar\beta_i -\hat{q}_i^br\alpha_i = \frac{\hat{O}_{i}^{ab}}{1-p},\\
&t\alpha_i\gamma_i + \hat{q}_i^a\hat{q}_i^c-\hat{q}_i^ar\gamma_i -\hat{q}_i^cr\alpha_i =\frac{\hat{O}_{i}^{ac}}{1-p},\\
&t\beta_i\gamma_i + \hat{q}_i^b\hat{q}_i^c-\hat{q}_i^br\gamma_i -\hat{q}_i^cr\beta_i =\frac{\hat{O}_{i}^{bc}}{1-p},
\end{align*} where $\beta_i$ and $\gamma_i$ are the top-left entries of $\mu_i^b$ and $\mu_i^c$, respectively, $p = P(Y=y)$, and $t = \frac{p}{(1-p)^2}$.

For ease of notation, we write $\hat{q}_i^a$ as $\hat{q}_a$ and so on. Rearranging the first and third equations gives us expressions for $\alpha$ and $\gamma$ in terms of $\beta.$ 
\begin{align*}
\alpha = \frac{\frac{\hat{O}_{ab}}{1-p}+\hat{q}_ar\beta-\hat{q}_a\hat{q}_b}{t\beta -\hat{q}_br},\\
\gamma = \frac{\frac{\hat{O}_{bc}}{1-p}+\hat{q}_cr\beta-\hat{q}_b\hat{q}_c}{t\beta -\hat{q}_br}.
\end{align*}

Substituting these expressions into the second equation of the system gives
\begin{align*}t \frac{\hat{q}_ar\beta + \frac{\hat{O}_{ab}}{1-p}-\hat{q}_a\hat{q}_b}{t\beta -\hat{q}_br}\cdot \frac{\hat{q}_cr\beta + \frac{\hat{O}_{bc}}{1-p}-\hat{q}_b\hat{q}_c}{t\beta -\hat{q}_br} + \hat{q}_a\hat{q}_c  -\hat{q}_ar \frac{\hat{q}_cr\beta + \frac{\hat{O}_{bc}}{1-p}-\hat{q}_b\hat{q}_c}{t\beta -\hat{q}_br} -\hat{q}_cr\frac{\hat{q}_ar\beta + \frac{\hat{O}_{ab}}{1-p}-\hat{q}_a\hat{q}_b}{t\beta -\hat{q}_br} = \frac{\hat{O}_{ac}}{1-p} .\end{align*}

We can multiply the equation by $(t\beta -\hat{q}_br)^2$ to get
\begin{gather*}
t(\hat{q}_ar\beta + \frac{\hat{O}_{ab}}{1-p}-\hat{q}_a\hat{q}_b)\cdot (\hat{q}_cr\beta + \frac{\hat{O}_{bc}}{1-p}-\hat{q}_b\hat{q}_c) + \hat{q}_a\hat{q}_c((t\beta -\hat{q}_br)^2)\\ -\hat{q}_ar(\hat{q}_cr\beta + \frac{\hat{O}_{bc}}{1-p}-\hat{q}_b\hat{q}_c)\cdot (t\beta -\hat{q}_br) - \hat{q}_cr(\hat{q}_ar\beta + \frac{\hat{O}_{ab}}{1-p}-\hat{q}_a\hat{q}_b)\cdot(t\beta -\hat{q}_br) \\
= \frac{\hat{O}_{ac}}{1-p}\cdot (t\beta -\hat{q}_br)^2 
\end{gather*}
\begin{gather*}
\Rightarrow t\bigg(\hat{q}_a\hat{q}_cr^2\beta^2 + (\frac{\hat{O}_{ab}}{1-p}-\hat{q}_a\hat{q}_b)\hat{q}_cr\beta + (\frac{\hat{O}_{bc}}{1-p}-\hat{q}_b\hat{q}_c)\hat{q}_ar\beta + (\frac{\hat{O}_{ab}}{1-p}-\hat{q}_a\hat{q}_b)(\frac{\hat{O}_{bc}}{1-p}-\hat{q}_b\hat{q}_c)\bigg) \\ 
+ \hat{q}_a\hat{q}_c\bigg(t^2\beta^2  - 2\hat{q}_b r t\beta +\hat{q}_b^2r^2 \bigg) \\ 
- \hat{q}_ar \bigg( \hat{q}_crt\beta^2 + (\frac{\hat{O}_{bc}}{1-p}-\hat{q}_b\hat{q}_c)t\beta - \hat{q}_b\hat{q}_cr^2\beta -\hat{q}_br(\frac{\hat{O}_{bc}}{1-p}-\hat{q}_b\hat{q}_c) \bigg) \\ 
- \hat{q}_cr\bigg( \hat{q}_art\beta^2 + (\frac{\hat{O}_{ab}}{1-p}-\hat{q}_a\hat{q}_b)t\beta - \hat{q}_a\hat{q}_br^2\beta -\hat{q}_br(\frac{\hat{O}_{ab}}{1-p}-\hat{q}_a\hat{q}_b) \bigg) \\ = \frac{\hat{O}_{ac}}{1-p}\cdot \bigg( t^2\beta^2  -2\hat{q}_brt\beta +\hat{q}_b^2r^2 \bigg)
\end{gather*}
\begin{gather*}
\Rightarrow \beta^2 \bigg[  \hat{q}_a\hat{q}_ct^2 - \hat{q}_a\hat{q}_cr^2e -\frac{\hat{O}_{ac}}{1-p}\cdot t^2 \bigg] \\
 + \beta \bigg[ t\bigg((\frac{\hat{O}_{ab}}{1-p}-\hat{q}_a\hat{q}_b)\hat{q}_cr + (\frac{\hat{O}_{bc}}{1-p}-\hat{q}_b\hat{q}_c)\hat{q}_ar\bigg) -2 \hat{q}_a\hat{q}_c\hat{q}_brt\\ - \hat{q}_ar\bigg( (\frac{\hat{O}_{bc}}{1-p}-\hat{q}_b\hat{q}_c)t-\hat{q}_b\hat{q}_cr^2\bigg) - \hat{q}_cr \bigg( (\frac{\hat{O}_{ab}}{1-p}-\hat{q}_a\hat{q}_b)t -\hat{q}_a\hat{q}_br^2\bigg) + \frac{\hat{O}_{ac}}{1-p} \cdot \bigg( 2\hat{q}_brt \bigg)\bigg] \\ 
 + \bigg[ t(\frac{\hat{O}_{ab}}{1-p}-\hat{q}_a\hat{q}_b)(\frac{\hat{O}_{bc}}{1-p}-\hat{q}_b\hat{q}_c)  + \hat{q}_a\hat{q}_c\hat{q}_b^2r^2 \\ +\hat{q}_a\hat{q}_br^2(\frac{\hat{O}_{bc}}{1-p}-\hat{q}_b\hat{q}_c) + \hat{q}_b\hat{q}_cr^2(\frac{\hat{O}_{ab}}{1-p}-\hat{q}_a\hat{q}_b) - \frac{\hat{O}_{ac}}{1-p}\cdot \hat{q}_b^2t^2\bigg] = 0.
\end{gather*}
The only sources of error are from estimating $c$'s and $O$'s. Let $\varepsilon_{c'}$ denote the error for the $c$'s. Let $\varepsilon_O$ denote the error from $O$'s. Applying the quadratic formula we obtain $\beta = (-b'\pm \sqrt{(b')^2-4a'c'})/(2a')$ where the coefficients are
\[
a' = \hat{q}_a\hat{q}_ct^2 - \hat{q}_a\hat{q}_cr^2t -\frac{\hat{O}_{ac}}{1-p}\cdot t^2 \]
\begin{gather*}
b' =   t\bigg((\frac{\hat{O}_{ab}}{1-p}-\hat{q}_a\hat{q}_b)\hat{q}_cr + (\frac{\hat{O}_{bc}}{1-p}-\hat{q}_b\hat{q}_c)\hat{q}_ar\bigg) -2 \hat{q}_a\hat{q}_c \hat{q}_brt \bigg)\\ - \hat{q}_ar\bigg( (\frac{\hat{O}_{bc}}{1-p}-\hat{q}_b\hat{q}_c)t-\hat{q}_b\hat{q}_cr^2\bigg) - \hat{q}_cr \bigg( (\frac{\hat{O}_{ab}}{1-p}-\hat{q}_a\hat{q}_b)t -\hat{q}_a\hat{q}_br^2\bigg) \\+ \frac{\hat{O}_{ac}}{1-p} \cdot \bigg( 2\hat{q}_brt  \bigg)\end{gather*} 
\begin{gather*}c'=  t(\frac{\hat{O}_{ab}}{1-p}-\hat{q}_a\hat{q}_b)(\frac{\hat{O}_{bc}}{1-p}-\hat{q}_b\hat{q}_c)  + \hat{q}_a\hat{q}_c\hat{q}_b^2r^2  +\hat{q}_a\hat{q}_br^2(\frac{\hat{O}_{bc}}{1-p}-\hat{q}_b\hat{q}_c) \\+ \hat{q}_b\hat{q}_cr^2(\frac{\hat{O}_{ab}}{1-p}-\hat{q}_a\hat{q}_b) - \frac{\hat{O}_{ac}}{1-p}\cdot \hat{q}_b^2r^2.
\end{gather*}

Let $\varepsilon_{a'},\varepsilon_{b'},\varepsilon_{c'}$ denote the error for each coefficient. That is, $\varepsilon_{a'} = |(a')^*-a'|$, where $(a')^*$ is the population-level coefficient, and similarly for $b', c'$. Let $\varepsilon_c$ and $\varepsilon_O$ indicate the estimation error for the $c$ and $O$ terms. Then 
\[
\varepsilon_{a'} = O\left(t^2p\varepsilon_{c}+\frac{t^2}{1-p}\varepsilon_O\right),\]
\[\varepsilon_{b'} = O\left(\frac{rt}{1-p}\left(\varepsilon_c+\varepsilon_{O}\right)\right),\]
\[\varepsilon_{c'} = O\left( \left(\frac{t}{(1-p)^2} + \frac{r^2}{1-p} + r^2\right) \varepsilon_{c} + \left(\frac{t^2}{(1-p)^2} + \frac{r^2}{1-p} \right) \varepsilon_O \right).\]
Because $d$ and $e$ are functions of $p$, if we ignore the dependence on $p$, we get that 
\[\varepsilon_{a'}=\varepsilon_{b'}=\varepsilon_{c'} = O(\varepsilon_{c}+\varepsilon_O).\]
Furthermore, \[\varepsilon_{(b')^2}=O(\varepsilon_b), \varepsilon_{a'c'}=O(\varepsilon_a+\varepsilon_c).\] 
It follows that \[\varepsilon_\beta = O(\sqrt{\varepsilon_{c'}+\varepsilon_O}).\]

Next, note that $O$ and $c$ are both the averages of indicator variables, where the $c_i$'s involve $P(g(\lambda^a)_i =1$ and the $O_i^{ab}$'s upper-left corners compute $P(g(\lambda^a)_i=1, g(\lambda^b)_i=1)$. Thu we can apply Hoeffding's inequality and combine this with the union bound to bound the above terms. We have with probability at least $1-\frac{\delta}{2},$ $\varepsilon_{c_i} \leq \sqrt{\frac{\log(2d/\delta)}{2n}}$ and similarly with probability at least $1-\frac{\delta}{2},$ $\varepsilon_{O_{i}} \leq \sqrt{\frac{\log(2d/\delta)}{2n}}$ for all $i\in[d]$. It follows that with probability at least $1-\delta,$ $\varepsilon_\alpha = \varepsilon_\beta = \varepsilon_\gamma = O((\frac{\log(2d/\delta)}{2n})^{1/4})$. 
\end{proof}  
We can now prove the main theorem in the rankings case.  \estimateTheta*

\begin{proof} Consider a pair of items $(a,b).$ Without loss of generality, suppose $a\prec_{y_1}b$ and $a\succ_{y_2}b.$ Define $\alpha_{a,b} = P(a\prec_{\lambda^a}b|y_1),$ $\alpha'_{a,b} = P(a\succ_{\lambda^a}b|y_2)$ then our estimate for $P(\lambda^a_{(a,b)} \sim Y_{(a,b)})$ where $\lambda^a_{(a,b)} \sim Y_{(a,b)}$ denotes the event that label function $i$ ranks $(a,b)$ correctly would be \[\hat{P}(\lambda^a_{(a,b)}\sim Y_{(a,b)})= p\hat{\alpha}_{a,b} + (1-p)(1-\hat{\alpha'}_{a,b})\] which has error $O(\epsilon_\alpha).$ Then, note that $\mathbb{E}[d(\lambda^a,Y)] = \sum_{a,b}P(\lambda^a_{(a,b)}\sim Y_{(a,b)}).$ Therefore, we can compute the estimate $\hat{\mathbb{E}}[d(\lambda^a,Y)] = \sum_{a,b} \hat{\mathbb{E}}[\lambda_i\sim Y]_{a,b}$ which has error $ O(\binom{\rho}{2}(\frac{\ln(6)/\delta}{2n})^{1/4})$.\newline \newline Recall from (\ref{expD}) we have that \[\mathbb{E}_\theta[D] = \frac{d[\log(M(t))]}{dt}\bigg|_{t=-\theta},\] where $M(t)$ is the moment generating function, and recall from (\ref{moment}) that 
\[M(t) = \frac{1}{k!}\prod_{j\leq k}\frac{1-e^{\theta j}}{1-e^\theta}.\] It follows that \begin{equation}\mathbb{E}_\theta[D]= \frac{ke^{-\theta}}{1-e^{-\theta}}-\sum_{j\leq k}\frac{je^{-\theta j}}{1-e^{-\theta j}}\end{equation} \[\Rightarrow \frac{d}{d\theta}\mathbb{E}_\theta[D] = \frac{-ke^{-\theta}}{(1-e^{-\theta})^2} + \sum_{j\leq k} \frac{j^2e^{-\theta j}}{(1-e^{-\theta j})^2}.\] Let $g_k(\theta) = \frac{-ke^{-\theta}}{(1-e^{-\theta})^2}+\sum_{j\leq k} \frac{j^2e^{-\theta j}}{(1-e^{-\theta j})^2}.$ By the lemma below, $g_k$ is non positive and increasing in $\theta$ for $\theta >0$. This means we can numerically compute the inverse function of (\ref{expD}), with the stated error. \end{proof}
\begin{lemma}
$g_k$ is non-positive and increasing in $\theta$ for  $\theta > 4\ln(2).$ Additionally, $g_k$ is decreasing in $k.$
\end{lemma}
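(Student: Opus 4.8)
The plan is to reduce all three claims to a single elementary comparison between a polynomially-weighted, exponentially-compressed term and an unweighted one. Write $f(x) = e^{-x}/(1-e^{-x})^2 > 0$, so that $g_k(\theta) = \sum_{j=1}^k j^2 f(j\theta) - k\, f(\theta)$, and peel off the $j=1$ summand to get $g_k(\theta) = \sum_{j=2}^k j^2 f(j\theta) - (k-1) f(\theta)$. The core estimate I would establish is: for every integer $m \ge 2$ and every $\theta \ge \tfrac{p \ln m}{m-1}$ one has $m^p\, h(m\theta) \le h(\theta)$, where $h$ is either $f$ (with $p=2$) or $F_3(x) := e^{-x}(1+e^{-x})/(1-e^{-x})^3$ (with $p=3$). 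This follows by writing the ratio explicitly, e.g. $f(m\theta)/f(\theta) = e^{-(m-1)\theta}\bigl((1-e^{-\theta})/(1-e^{-m\theta})\bigr)^2$, observing that every rational factor is $\le 1$ because $e^{-m\theta} \le e^{-\theta}$, so that $h(m\theta)/h(\theta) \le e^{-(m-1)\theta}$, and then using $m^p e^{-(m-1)\theta} \le 1 \iff \theta \ge \tfrac{p\ln m}{m-1}$. Since $\tfrac{\ln m}{m-1}$ is decreasing for $m \ge 2$ (its continuous version has negative derivative, using $\ln m > 1 - 1/m$), the worst case is $m=2$, giving thresholds $2\ln 2$ (for $p=2$) and $3\ln 2$ (for $p=3$), both comfortably below the stated $4\ln 2$.

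Non-positivity then follows immediately: for $\theta > 4\ln 2 \ge 2\ln 2$ each summand satisfies $j^2 f(j\theta) \le f(\theta)$, so $\sum_{j=2}^k j^2 f(j\theta) \le (k-1) f(\theta)$ and hence $g_k(\theta) \le 0$. A cleaner, threshold-free alternative worth noting is the exponential-family identity: since $g_k(\theta) = \tfrac{d}{d\theta}\mathbb{E}_\theta[D]$ and $\mathbb{E}_\theta[D]$ is the mean of $D = d_\tau(\lambda^a, y)$ in the Mallows family with natural parameter $-\theta$, one has $g_k(\theta) = -\Var_\theta(D) \le 0$ for all $\theta > 0$; I would nonetheless lead with the analytic bound since it is what also powers the monotonicity.

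For monotonicity in $\theta$, I differentiate: a short computation gives $f'(x) = -F_3(x)$, so $g_k'(\theta) = k F_3(\theta) - \sum_{j=1}^k j^3 F_3(j\theta)$. Applying the core estimate with $h = F_3$, $p = 3$, and $\theta > 4\ln 2 \ge 3\ln 2$ yields $\sum_{j=2}^k j^3 F_3(j\theta) \le (k-1) F_3(\theta)$, whence $g_k'(\theta) \ge 0$. For monotonicity in $k$, I telescope: $g_{k+1}(\theta) - g_k(\theta) = (k+1)^2 f((k+1)\theta) - f(\theta)$, which is $\le 0$ by the core estimate with $m = k+1$, since the required threshold $\tfrac{2\ln(k+1)}{k} \le 2\ln 2 < 4\ln 2$ holds for all $k \ge 1$.

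The only genuine obstacle is isolating and proving the core inequality with an explicit, clean threshold: in particular verifying that all the $(1-e^{-\theta})/(1-e^{-m\theta})$-type factors are bounded by $1$ and that the maximizer of $\tfrac{\ln m}{m-1}$ over integers $m\ge 2$ is $m=2$. Everything else is bookkeeping (splitting off the $j=1$ term, differentiating $f$, telescoping in $k$). The slightly delicate point is the derivative step: one must correctly identify $f' = -F_3$ and confirm that the heavier weight $j^3$ is still dominated by $e^{-(j-1)\theta}$, which is exactly why the monotonicity threshold $3\ln 2$ exceeds the non-positivity threshold $2\ln 2$ while remaining within $4\ln 2$.
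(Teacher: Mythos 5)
Your proof is correct and follows essentially the same route as the paper's: termwise domination of each $j\ge 2$ summand by the $j=1$ term, reduced to an inequality of the form $j^p e^{-(j-1)\theta}\le \text{const}$ after noting the rational factors are at most $1$, with the worst case at $j=2$ since $\ln(j)/(j-1)$ is decreasing, plus the same telescoping for monotonicity in $k$. Your multiplicative handling of the $(1+e^{-x})$ factor is marginally sharper (threshold $3\ln 2$ rather than the paper's $4\ln 2$ for the derivative step), and the $g_k(\theta) = -\Var_\theta(D)$ aside is a nice threshold-free alternative for non-positivity, but these are refinements of the same argument rather than a different one.
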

\begin{proof}
We first show non-positivity. For this, it is sufficient to show 
\begin{equation}\label{dec}\frac{j^2e^{-\theta j}}{(1-e^{-\theta j})^2} \leq \frac{e^{-\theta}}{(1-e^{-\theta})^2}.\end{equation} holds for all $1\leq j $ and $\theta>0.$ This clearly holds for $j=1.$
Rearranging gives
\[j^2e^{-\theta(j-1)} \leq \bigg(\frac{1-e^{-\theta j}}{1-e^{-\theta}}\bigg)^2.\]
The right-hand term is greater than or equal to 1, so it suffices to choose $\theta$ such that
\[j^2e^{-\theta(j-1)}\leq 1\]
\[\Rightarrow \theta \geq \frac{\ln(j^2)}{j-1}.\] It can be shown that the right-hand term decreases with $j$ for $j>1,$ so it suffices to take $j =2,$ which implies $\theta \geq 2\ln(2).$ By choice of $\theta$ this is clearly true. \newline \newline
To show that $g_k$ is increasing in $\theta,$ we consider \[\frac{d}{d\theta}g_k(\theta) = \frac{ke^{-\theta}(1-e^{-2\theta})}{(1-e^{-\theta})^4} + \sum_{j \leq k}\frac{j^3e^{-\theta j}(e^{-2\theta j}-1)}{(1-e^{-\theta j})^4}\] Similarly, it suffices to show \[\frac{j^3e^{-\theta j}(1-e^{-2\theta j})}{(1-e^{-\theta j})^4} \leq \frac{e^{-\theta}(1-e^{-2\theta})}{(1-e^{-\theta})^4}.\] Rearranging, we have
\[j^3e^{-\theta (j-1)} \leq \frac{1-e^{-2\theta}}{1-e^{-2\theta j}}\bigg(\frac{1-e^{-\theta j}}{1-e^{-\theta}}\bigg)^4.\] Note that
\[\frac{1-e^{-2\theta}}{1-e^{-2\theta j}}\bigg(\frac{1-e^{-\theta j}}{1-e^{-\theta}}\bigg)^4 \geq \frac{1-e^{-\theta j}}{1-e^{-2\theta j}}.\] The term on the right is greater than $\frac{1}{2}$ for $\theta >0$ and $j\geq 1.$ Therefore, it suffices to choose $\theta$ such that
\[j^3e^{-\theta(j-1)}\leq \frac{1}{2}\]
\[\Rightarrow \theta \geq \frac{\ln(2j^3)}{j-1}.\] Once again, the term on the right is decreasing in $j,$ so we can take $j=2,$ giving $\theta \geq 4\ln(2),$ which is satisfied by our choice of $\theta.$ \newline \newline
Finally, the fact that $g_k$ is decreasing in $k$ follows from (\ref{dec}).
\end{proof}

\subsection{Euclidean Embedding Case}
\estimateA*

\begin{proof}
For three conditionally independent label functions $\lambda_a,\lambda_b,\lambda_c$, our estimate of $\mathbb{E}g(\lambda^a)g(y)]$ is 
\begin{align*}
|\hat{\mathbb{E}}[g(\lambda^a)g(y)]| = \left(\frac{|\hat{e}_{ab}| |\hat{e}_{a,c}|\mathbb{E}[Y^2]}{|\hat{e}_{b,c}|}\right)^{\frac{1}{2}}.
\end{align*}

Furthermore, if we define $x_{a,b} = \frac{|\hat{e}_{a,b}|}{|e_{a,b}|}$, we can write the ratio of elements of $\hat{a}$ to $a$ as 
\begin{align*}
k_{a} = \frac{|\hat{\mathbb{E}}[g(\lambda^a)g(y)]|}{|\mathbb{E}[g(\lambda^a)g(y)]|} = \left(\frac{|\hat{e}_{i j}|}{|e_{i j}|} \cdot \frac{|\hat{e}_{a,c}|}{|e_{a,c}|} \cdot \frac{|e_{b,c}|}{|\hat{e}_{b,c}|} \right)^{\frac{1}{2}} = \left(\frac{x_{a,b} x_{a,c}}{x_{b,c}} \right)^{\frac{1}{2}}.
\end{align*}
(and the other definitions are symmetric for $k_{b}$ and $k_{b}$). Now note that because we assume that signs are completely recoverable, 
$
|\hat{\mathbb{E}}[g(\lambda^a)g(y)] - \mathbb{E}[g(\lambda^a)g(y)]| =  \bigg||\hat{\mathbb{E}}[g(\lambda^a)g(y)]| - |\mathbb{E}[g(\lambda^a)g(y)]|\bigg|
$.

Next note that \[|\hat{\mathbb{E}}[g(\lambda^a)g(y)]^2 - \mathbb{E}[g(\lambda^a)g(y)]^2| = |\hat{\mathbb{E}}[g(\lambda^a)g(y)] - \mathbb{E}[g(\lambda^a)g(y)]| |\hat{\mathbb{E}}[g(\lambda^a)g(y)] + \mathbb{E}[g(\lambda^a)g(y)] |.\] By the reverse triangle inequality, 
\begin{align*}(|\hat{\mathbb{E}}[g(\lambda^a)g(y)]| - |\mathbb{E}[g(\lambda^a)g(y)]|)^2 = &||\hat{\mathbb{E}}[g(\lambda^a)g(y)]| - |\mathbb{E}[g(\lambda^a)g(y)]||^2 \\ &\le |\hat{\mathbb{E}}[g(\lambda^a)g(y)] - \mathbb{E}[g(\lambda^a)g(y)]|^2 \\& = \left( \frac{|\hat{\mathbb{E}}[g(\lambda^a)g(y)]^2 - \mathbb{E}[g(\lambda^a)g(y)]^2 |}{|\hat{\mathbb{E}}[g(\lambda^a)g(y)] + \mathbb{E}[g(\lambda^a)g(y)]|}\right)^2 \\ & \le \frac{1}{c^2} |\hat{\mathbb{E}}[g(\lambda^a)g(y)]^2 - \mathbb{E}[g(\lambda^a)g(y)]^2|^2,\end{align*} where we define $c$ as $\min_a |\hat{\mathbb{E}}[g(\lambda^a)g(y)] + \mathbb{E}[g(\lambda^a)g(y)]|$. (With high probability $c = \hat{a}_{|min|}| + |a_{|min|}$, but there is a chance that $\hat{\mathbb{E}}[g(\lambda^a)g(y)]$ and $\mathbb{E}[g(\lambda^a)g(y)]$ have opposite signs.) For ease of notation, suppose we examine a particular $(a,b,c) = (1, 2, 3)$. Then,
\begin{align*}
(|\mathbb{E}&[g(\lambda^1)g(y)]| - |\hat{\mathbb{E}}[g(\lambda^1)g(y)]|)^2 \\ 
&\le \frac{1}{c^2}|\hat{\mathbb{E}}[g(\lambda^1)g(y)]^2 - \mathbb{E}[g(\lambda^1)g(y)]|^2 = \frac{1}{c^2} \Bigg| \frac{|\hat{e}_{12}| |\hat{e}_{13}|}{|\hat{e}_{23}|} - \frac{|e_{12}| |e_{13}|}{|e_{23}|} \Bigg|^2 \\
&= \frac{1}{c^2} \Bigg| \frac{|\hat{e}_{12}| |\hat{e}_{13}|}{|\hat{e}_{23}|} - \frac{|\hat{e}_{12}| |\hat{e}_{13}|}{|e_{23}|} + \frac{|\hat{e}_{12}| |\hat{e}_{13}|}{|e_{23}|} - \frac{|\hat{e}_{12}| |e_{13}|}{|e_{23}|} +  \frac{|\hat{e}_{12}| |e_{13}|}{|e_{23}|} - \frac{|e_{12}| |e_{13}|}{|e_{23}|} \Bigg|^2 \\
&\le\frac{1}{c^2} \left(\Big|\frac{\hat{e}_{12} \hat{e}_{13}}{\hat{e}_{23} e_{23}}\Big| ||\hat{e}_{23}| - |e_{23}| | + \Big|\frac{\hat{e}_{12}}{e_{23}} \Big| ||\hat{e}_{13}| - |e_{13}|| + \Big|\frac{e_{13}}{e_{23}} \Big| ||\hat{e}_{12}| - |e_{12}||\right)^2 \\
&\le \frac{1}{c^2} \left(\Big|\frac{\hat{e}_{12} \hat{e}_{13}}{\hat{e}_{23} e_{23}}\Big| |\hat{e}_{23} - e_{23}| + \Big|\frac{\hat{e}_{12}}{e_{23}} \Big| |\hat{e}_{13} - e_{13}| + \Big|\frac{e_{13}}{e_{23}} \Big| |\hat{e}_{12} - e_{12}|\right)^2.
\end{align*}

With high probability, all elements of $\hat{e}$ and $e$ must be less than $e_{max}=\max_{j,k}e_{j,k}$. We further know that elements of $|e|$ are at least $a_{|min|}^2/\mathbb{E}[Y^2]$. Now suppose (with high probability) that elements of $|\hat{e}|$ are at least $\hat{a}_{|min|}^2 > 0$, and define $\triangle_{a,b} = \hat{e}_{a,b} - e_{a,b}$. Then,
\begin{align*} & (|\mathbb{E}[g(\lambda^1)g(y)]^2| - |\mathbb{E}[g(\lambda^1)g(y)]|)^2\\
 &\le \frac{\max(e_{max},e_{max}^2)}{c^2} \left(\frac{1}{a_{|min|}^2 \hat{a}_{|min|}^2\mathbb{E}[Y^2]^2} |\triangle_{23}| + \frac{1}{a_{|min|}^2\mathbb{E}[Y^2]} |\triangle_{13}| + \frac{1}{a_{|min|}^2\mathbb{E}[Y^2]} |\triangle_{12}|\right)^2 \\
&\le \frac{\max(e_{max},e_{max}^2)}{c^2}(\triangle_{23}^2 + \triangle_{13}^2 + \triangle_{12}^2)\left(\frac{1}{a^4_{|min|} \hat{a}^4_{|min|}\mathbb{E}[Y^2]^4} + \frac{2}{a^4_{|min|}\mathbb{E}[Y^2]^2}\right).
\end{align*}

The original expression is now
\begin{align*}
|\hat{\mathbb{E}}&[g(\lambda^a)g(y)] - \mathbb{E}[g(\lambda^a)g(y)]| \\
&\le \left( \frac{\max(e_{max},e_{max}^2)}{c^2}\left(\frac{1}{a_{|min|}^4 \hat{a}_{|min|}^4\mathbb{E}[Y^2]^4} + \frac{2}{a_{|min|}^4\mathbb{E}[Y^2]^2} \right)(\triangle_{a,b}^2 + \triangle_{a,c}^2 + \triangle_{b,c}^2) \right)^{\frac{1}{2}}.
\end{align*}

To bound the maximum absolute value between elements of $\hat{e}$ and $e$, note that 
\begin{align*}
\left(2\left(\triangle^2_{i j} + \triangle^2_{a,c} + \triangle^2_{b,c}\right) \right)^{\frac{1}{2}} \leq ||\hat{e}_{a,b,c} - e_{a,b,c}||_F,
\end{align*}
where $e_{a,b,c}$ is a 3 $\times$ 3 matrix with $e_{i,j}$ in the $(i,j)$-th position.  
Moreover, it is a fact that $||\hat{e}_{a,b,c} - e_{a,b,c}||_F \leq \sqrt{r} ||\hat{e}_{a,b,c} - e_{a,b,c} ||_2 \le \sqrt{3} ||\hat{e}_{a,b,c} - e_{a,b,c} ||_2$ where $r$ is the rank of $\hat{e}_{a,b,c} - e_{a,b,c}$. Putting everything together,
\begin{align*}
&| \hat{\mathbb{E}}[g(\lambda^a)g(y)] - \mathbb{E}[g(\lambda^a)g(y)]|\\ &\le \left( \frac{\max(e_{max},e_{max}^2)}{c^2}\left(\frac{1}{a_{|min|}^4 \hat{a}_{|min|}^4\mathbb{E}[Y^2]^4} + \frac{2}{a_{|min|}^4\mathbb{E}[Y^2]^2} \right) \cdot \frac{1}{2} ||\hat{e}_{a,b,c} - e_{a,b,c} ||_F^2 \right)^{\frac{1}{2}} \\
&\le \left( \frac{\max(e_{max},e_{max}^2)}{c^2}\left(\frac{1}{a_{|min|}^4 \hat{a}_{|min|}^4\mathbb{E}[Y^2]^4} + \frac{2}{a_{|min|}^4\mathbb{E}[Y^2]^2} \right) \cdot \frac{3}{2} ||\hat{e}_{a,b,c} - e_{a,b,c} ||_2^2 \right)^{\frac{1}{2}}.
\end{align*}

Lastly, to compute $\mathbb{E}[||\hat{\mathbb{E}}[g(\lambda^a)g(y)] - \mathbb{E}[g(\lambda^a)g(y)]||_2]$, we use Jensen's inequality (concave version, due to the square root) and linearity of expectation:
\[\mathbb{E}[|\hat{\mathbb{E}}[g(\lambda^a)g(y)] - \mathbb{E}[g(\lambda^a)g(y)]|]\]
\[ \le \left( \frac{\max(e_{max},e_{max}^2)}{c^2}\left(\frac{1}{a_{|min|}^4 \hat{a}_{|min|}^4\mathbb{E}[Y^2]^4} + \frac{2}{a_{|min|}^4\mathbb{E}[Y^2]^2} \right) \cdot \frac{3}{2}  \mathbb{E}[||\hat{e}_{a,b,c} - e_{a,b,c} ||_2^2] \right)^{\frac{1}{2}}.
\]

We use the covariance matrix inequality as described in \cite{matrIneq}, which states that
\begin{align*}
P(||\hat{e} - e||_2 \ge \gamma) \le \max\left(2e^{3-\frac{n\gamma}{\sigma^2C}},2e^{3-\frac{n\gamma^2}{\sigma^4C^2}}\right),
\end{align*} where $\sigma = \max_a e_{aa}$ and $C>0$ is a universal constant.

To get the probability distribution over $||\hat{e}_{a,b,c} - e_{a,b,c}||_2^2$, we just note that $P(||\hat{e}_{a,b,c} - e_{a,b,c}||_2 \ge \gamma) = P(||\hat{e} - e||_2^2 \ge \gamma^2)$ to get
\begin{align*}
P(||\hat{e}_{a,b,c} - e_{a,b,c}||_2^2 \ge \gamma) \le 2e^3\max\left(e^{-\frac{n\sqrt{\gamma}}{\sigma^2C}},e^{-\frac{n\gamma}{\sigma^4C^2}}\right).
\end{align*}

From this we can integrate to get
\begin{align*}
\mathbb{E}[||\hat{e}_{a,b,c} - e_{a,b,c}||^2_2 ] = \int_0^{\infty} P(||\hat{e}_{a,b,c} - e_{a,b,c}||_2^2 \ge \gamma) d\gamma \le 2e^3\max\left(\frac{\sigma^4C^2}{n},2\frac{\sigma^4C^2}{n^2}\right) = O(\frac{\sigma^4}{n}).
\end{align*}

Substituting this back in, we get
\[\mathbb{E}[|\hat{\mathbb{E}}[g(\lambda^a)g(y)] - \mathbb{E}[g(\lambda^a)g(y)]|]\]
\begin{align*}
&\le \left( \frac{\max(e_{max},e_{max}^2)}{(\hat{a}_{|min|} + a_{|min|})^2}\left(\frac{1}{a_{|min|}^4 \hat{a}_{|min|}^4\mathbb{E}[Y^2]^4} + \frac{2}{a_{|min|}^4\mathbb{E}[Y^2]^2} \right) \cdot  O\left(\frac{\sigma^4}{n}\right) \right)^{\frac{1}{2}} \\
&\le \left( \frac{\max(e_{max},e_{max}^2)}{(\hat{a}_{|min|} + a_{|min|})^2} \cdot \left(\frac{1}{a_{|min|}^4 \hat{a}_{|min|}^4\mathbb{E}[Y^2]^4} + \frac{2}{a_{|min|}^4\mathbb{E}[Y^2]^2} \right) \cdot O\left(\frac{\sigma^4}{n}\right)\right)^{\frac{1}{2}}\\&
\le \left( \frac{\max(e_{max},e_{max}^2)}{(\hat{a}_{|min|} + a_{|min|})^2\min(\mathbb{E}[Y^2]^4,\mathbb{E}[Y^2]^2)} \cdot \left(\frac{1}{a_{|min|}^4 \hat{a}_{|min|}^4} + \frac{2}{a_{|min|}^4} \right) \cdot O\left(\frac{\sigma^4}{n}\right)\right)^{\frac{1}{2}}
\end{align*}
with high probability. Finally, we clean up the $a_{|min|}$ and $\hat{a}_{|min|}$ terms. The terms involving $a$ and $\hat{a}$ can be rewritten as
\begin{align*}
\frac{1 + 2\hat{a}^4_{|min|}}{a^6_{|min|} \hat{a}^4_{|min|} + 2a_{|min|}^5 \hat{a}_{|min|}^5 + a^4_{|min|} \hat{a}^6_{|min|}}.
\end{align*}

We suppose that $\frac{\hat{a}_{|min|}}{a_{|min|}} \in [1 - \epsilon, 1 + \epsilon]$ for some $\epsilon > 0$ with high probability. Then, this becomes less than
\begin{align*}
\frac{1 + 2\hat{a}^4_{|min|}}{(1 - \epsilon)^4 a^{10}_{|min|} + 2(1 - \epsilon)^5 a^{10}_{|min|} + (1 - \epsilon)^6 a^{10}_{|min|}} &\le \frac{1 + 2\hat{a}^4_{|min|}}{4(1 - \epsilon)^6 a^{10}_{|min|}} \\& 
= O\left(\frac{1}{a^{10}_{|min|}}+ \frac{1}{a^{6}_{|min|}}\right).
\end{align*}

Therefore, with high probability, the sampling error for the accuracy is bounded by
\[
\mathbb{E}[||\hat{\mathbb{E}}[g(\lambda^a)g(y)] - \mathbb{E}[g(\lambda^a)g(y)]||_2] = O\left( \sigma^2\left(\frac{1}{a^{10}_{|min|}}+ \frac{1}{a^{6}_{|min|}}\right) \sqrt{\frac{\max(e_{max},e_{max}^2)}{n}}\right)
\]\[=O\left(\left(\frac{1}{a^{10}_{|min|}}+ \frac{1}{a^{6}_{|min|}}\right) \sqrt{\frac{\max(e_{max}^5,e_{max}^6)}{n}}\right) .\]
\end{proof}
Note that if all label functions are conditionally independent, we only need to know the sign of one accuracy to recover the rest. For example, if we know if $\mathbb{E}[g(\lambda^1)g(y)]$ is positive or negative, we can use $\mathbb{E}[g(\lambda^1)g(y)]\mathbb{E}[g(\lambda^2)g(y)] = e_{1,2}\mathbb{E}[Y^2]^2$, $\mathbb{E}[g(\lambda^1)g(y)]\mathbb{E}[g(\lambda^3)g(y)] = e_{1,3}\mathbb{E}[Y^2]^2, \dots, \mathbb{E}[g(\lambda^1)g(y)]\mathbb{E}[g(\lambda^m)g(y)] = e_{1,m}\mathbb{E}[Y^2]^2$ to recover all other signs. 

%
\section{Extended Experimental Details} \label{appendix:extended-experimental-details}
In this section, we provide some additional experimental results and details. All experiments were conducted on a machine with Intel Broadwell 2.7GHz CPU and NVIDIA GK210 GPU. Each experiment takes from 30 minutes up to a few hours depending on the experiment conditions.

\paragraph{Hyperbolic Space and Geodesic Regression}
The following is basic background useful for understanding our hyperbolic space models. 
Hyperbolic space is a Riemannian manifold. Unlike Euclidean space, it does not have a vector space structure. Therefore it is not possible to directly add points. However, geometric notions like length, distance, and angles do exist; these are obtained through the Riemannian metric for the space. Points $p$ in Smooth manifolds $M$ are equipped with a \emph{tangent space} denoted $T_pM$. The elements (tangent vectors $v \in T_pM$) in this space are used for linear approximations of a function at a point. 

The shortest paths in a Riemannian manifold are called geodesics. Each tangent vector $x \in T_pM$ is equivalent to a particular geodesic that takes $p$ to a point $q$. Specifically, the \emph{exponential map} is the operation $\exp : T_pM \rightarrow M$ that takes $p$ to the point $q = \exp_p(v)$ that the tangent vector $v$ points to. In addition, the length $\|v\|$ of the tangent vector is also the distance $d(p,q)$ on the manifold.
This operation can be reversed with the log map, which, given $q \in M$, provides the tangent vector $v = \log_p(q)$. 

The geodesic regression model is the following. Set some intercept $p \in M$. Scalars $x_i \in \mathbb{R}$ are selected according to some distribution. Without noise, the output points are selected along a geodesic through $p$ parametrized by $\beta$: $y_i = \exp_p(x_i \times \beta)$, where $\beta$ is the weight vector, a tangent vector in $T_p(M)$ that is not known. This is a noiseless model; the more interesting problem allows for noise, generalizing the typical situation in linear regression. This noise is added using the following approach. For notational convenience, we write $\exp_p(v)$ as $\exp(p,v)$. Then, the noisy $y$ are
$y_i = \exp(\exp(p, x_i \times \beta), \varepsilon_i)$, where $\varepsilon_i \in T_{\exp(p, x_i)}M$. This notion generalizes adding zero-mean Gaussian noise to the $y$'s in conventional linear regression.

The equivalent of least squares estimation is then given by $\hat{p}, \hat{\beta} = \arg \min_{q, \alpha} \sum_{i=1}^n d(\exp(q, \alpha  x_i), y_i)^2$.

\paragraph{Semantic Dependency Parsing} We used datasets on Czech and English taken from the Universal Dependencies \cite{udeps} repository. The labeling functions were Stanza \cite{qi2020stanza} pre-trained semantic dependency parsing models. These pre-trained models were trained over other datasets from the same language. For the Czech experiments, these were the models \texttt{cs.cltt, cs.fictree, cs.pdt}, while for English, they were taken from \texttt{en.gum, en.lines, en.partut, en.ewt}. The metric is the standard unlabeled attachment score (UAS) metric used for unlabeled dependency parsing. Finally, we used access to a subset of labels to compute expected distances, as in the label model variant in \cite{chen2021latent}.

\paragraph{Movies dataset processing}
In order to have access to ample features for learning to rank and regression tasks, we combined IMDb \citep{imdbdata} and TMDb\citep{tmdbdata} datasets based on movie id. In the IMDb dataset, we mainly used movie metadata, which has information chiefly about the indirect index of the popularity (e.g. the number of director facebook likes) and movie (e.g. genres). The TMDb dataset gives information mainly about the movie, such as runtime and production country. For ranking and regression, we chose \textit{vote\_average} column from TMDb dataset as a target feature. In rankings, we converted the \textit{vote\_average} column into rankings so the movie with the highest review has a top ranking. We excluded the movie scores in IMDb dataset from input features since it is directly related to TMDb \textit{vote\_average}. But we later included IMDb, Rotten Tomatoes, MovieLens ratings as weak labels in Table \ref{table:real_lf_movies}.

After merging the two datasets, we performed feature engineering  as follows.
\begin{enumerate}
    \item One-hot encoding: \textit{color, content\_rating, original\_language, genres, status}
    \item Top (frequency) 0.5\% one-hot encoding: \textit{plot\_keywords, keywords, production\_companies, actors}
    \item Count: \textit{production\_countries, spoken\_languages}
    \item Merge (add) \textit{actor\_1\_facebook\_likes, actor\_2\_facebook\_likes, actor\_3\_facebook\_likes} into \textit{actor\_facebook\_likes}
    \item Make a binary feature regarding whether the movie's homepage is missing or not.
    \item Transform date into one-hot features such as month, the day of week.
\end{enumerate}

By adding the features above, we were able to enhance performance significantly in the fully-supervised regression task used as a baseline.

In the ranking experiments, we randomly sampled 5000 sets of movies as the training set, and 1000 sets of movies as the test set. The number of items in an item set (ranking set) was 5. Note that the movies in the training set and test set are strictly separated, i.e. if a movie appears in the training set, it is not included in the test set.

 \paragraph{Model and hyperparameters} In the ranking setup, we used 4-layer MLP with ReLU activations. Each hidden layer had 30 units and batch normalization\citep{ioffe2015batch} was applied for all hidden layers. We used the SGD optimizer with ListMLE loss \citep{xia2008listwise}; the learning rate was 0.01.
 
 In the regression experiments, we used gradient boosting regression implemented in sklearn with \textit{n\_estimators}=250. Other than \textit{n\_estimators}, we used the default hyperparameters in sklearn's implementation.

\paragraph{BoardGameGeek data processing}
In the BoardGameGeek dataset \citep{boardgamedata}, we used metadata of board games only. Since this dataset showed enough model performance without additional feature engineering, we used the existing features: \textit{yearpublished, minplayers, maxplayers, playingtime, minplaytime, maxplaytime, minage, owned, trading, wanting, wishing, numcomments, numweights, averageweight}. The target variable was average ratings (\textit{average}) for regression, and board game ranking (\textit{Board Game Rank}). Note that `Board Game Rank` cannot be directly calculated from average ratings. BoardGameGeek has its own internal formula to determine the ranking of board games. In the ranking setup, we randomly sampled 5000 board game sets as the training set, and 1000 board game sets as the test set.

\paragraph{Simulated LFs generation in ranking and regression} In rankings, we sampled from $\lambda^{a}(i) \sim \frac{1}{Z}e^{-\beta_{a} d_{\tau}(\pi, Y_{i})}$ with $\beta_{a}$. For more realistic heterogeneous LFs, 1/3 (good) LFs  $\beta_{a}$ are sampled from $Uniform(0.2, 1)$, and 2/3 (bad) LFs' $\beta_{a}$ are sampled from $Uniform(0.001, 0.01)$. Note that higher value of  $\beta_{a}$ yields less noisy weak labels. In regression, we used the conditional distribution $\Lambda | Y$ to generate samples of $\Lambda$. Specifically, where $\Lambda | Y \sim \mathcal{N}(\bar{\mu}, \bar{\Sigma})$, where $\bar{\mu}=\Sigma_{\Lambda Y}\Sigma^{-1}_{Y}y$ and $\bar{\Sigma}=\Sigma_{\Lambda}-\Sigma_{\Lambda Y}\Sigma^{-1}_{Y}\Sigma_{Y\Lambda}$, from the assumption $(\Lambda, Y) \sim \mathcal{N}(0, \Sigma)$.
\section{Additional Synthetic Experiments}\label{appendix:additional-synthetic-LF}
We present several additional synthetic experiments, including results for partial ranking labeling functions and for parameter recovery in regression settings.

\subsection{Ranking}
To check whether our algorithm works well under different conditions, we performed additional experiments with varied parameters. In addition, we performed a partial ranking experiment.

\paragraph{Ranking synthetic data generation}
 First, $n$ true rankings $Y$ are sampled uniformly at random. In the full ranking setup, each LF $\pi_i$ is sampled from the Mallows $\lambda^{a}(i) \sim \frac{1}{Z}e^{-\beta_{a} d_{\tau}(\pi, Y_{i})}$ with parameters $\beta_a, Y_i$ and in the partial ranking setup it is sampled from a selective Mallows with parameters $\beta_a, Y_i, S_i$ where each $S_i\subseteq [d]$ is chosen randomly while ensuring that each $x\in[d]$ appears in at least a $p$ fraction of these subsets. Higher $p$ corresponds to dense partial rankings and smaller $p$ leads to sparse partial rankings. We generate 18 LFs with 10 of then having $\beta_a\sim Uniform(0.1,0.2)$ and rest have $\beta_a \sim Uniform(2,5)$. This was done in order to model LFs of different quality. These LFs are then randomly shuffled so that the order in which they are added is not a factor. For the partial rankings setup we use the same process to get $\beta_a$ and randomly generate $S_i$ according to the sparsity parameter $p$. For a set of LFs parameters we run the experiments for 5 random trials and record the mean and standard deviation.
 
\paragraph{Full Ranking Experiments} 
Figure \ref{fig:full-rank-agg} shows synthetic data results without an end model, i.e., just using the inference procedure as an estimate of the label. We report the `Unweighted' Kemeny baseline that ignores differing accuracies. `Estimated Weights' uses our approach, while `Optimal Weights' is based on an oracle with access to the true $\theta_a$ parameters. As expected, synthesis with estimated parameters improves on the standard Kemeny baseline. The improvement for the full rankings case (top) is higher for \emph{fewer} LFs; this is intuitive, as adding more LFs globally improves estimation even when accuracies differ.

\begin{figure}
	\centering
	\subfigure [$d=10, n=250$]{
\includegraphics[width=\figwidth\textwidth]{./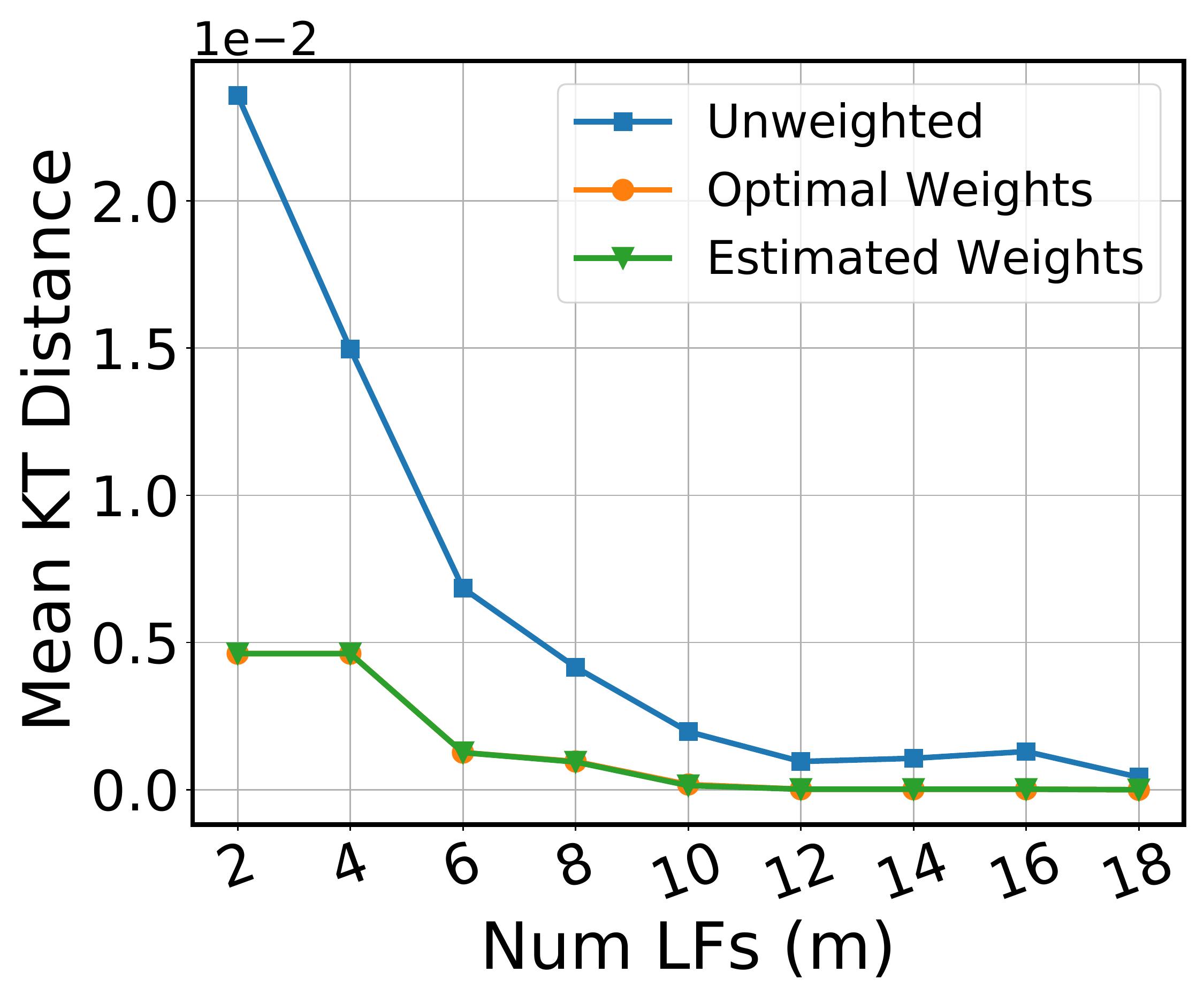} }
\subfigure[$d=20, n=250$]{\includegraphics[width=\figwidth\textwidth]{./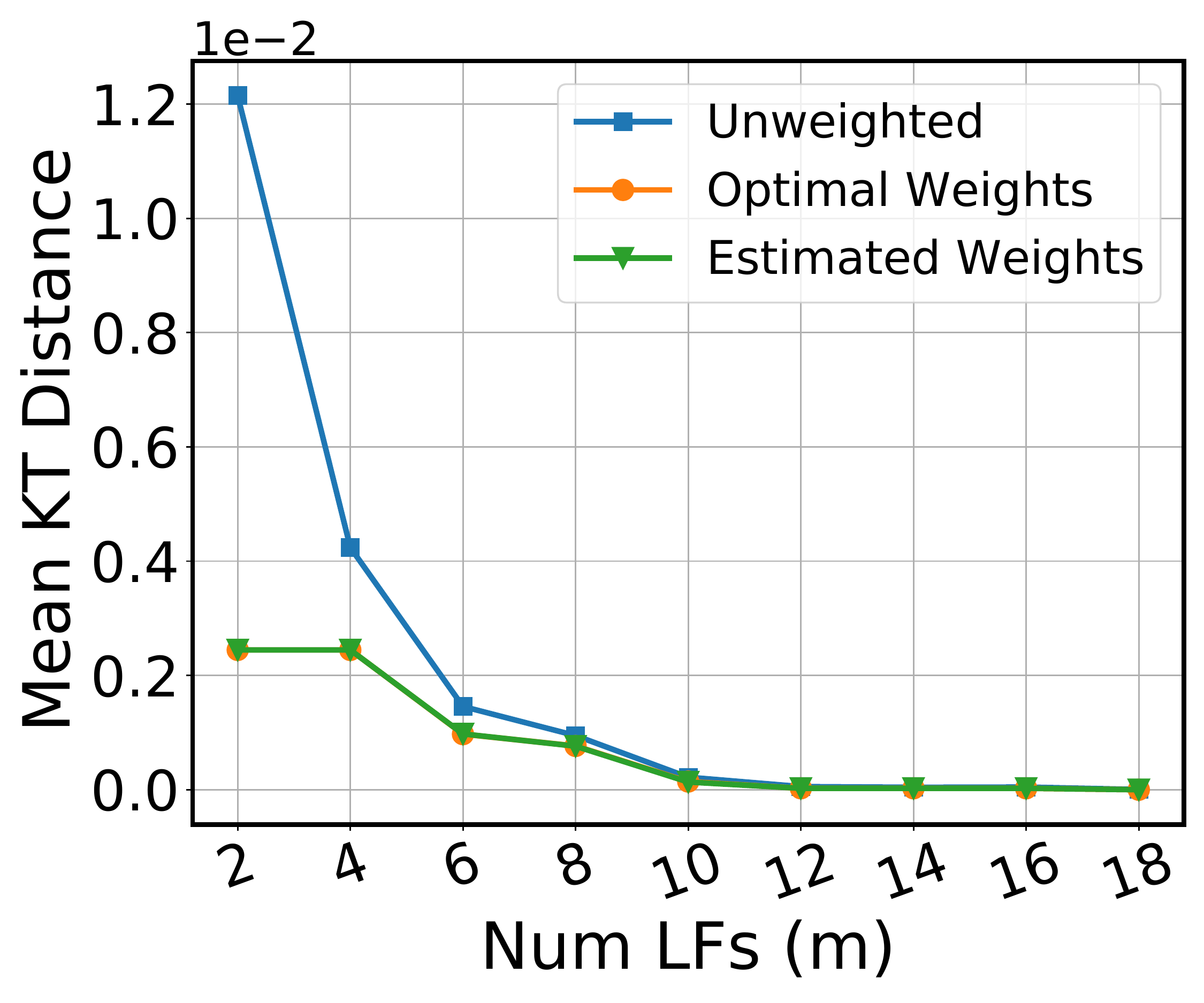}}

	\caption{Inference via weighted and standard Kemeny rule over full rankings (top) with permutations of size $d=10, 20$. 
    Error metric is Kendall tau distance (lower is better). Proposed weighted Kemeny rule is nearly optimal on full rankings.}
	\label{fig:full-rank-agg}
\end{figure}
 
\paragraph{Partial Ranking Experiments}
In the synthetic data partial ranking setup, we vary the value of $p$ (observation probability) from $0.9$ (dense) to $0.2$ (sparse) and apply our extension of the inference method to partial rankings. Figure \ref{fig:partial-rank-agg} shows the results obtained. Our observations in terms of unweighted vs weighted aggregation remain consistent here with the full rankings setup. This suggests that the universal approach can provide the same type of gains in the partial rankings.

\begin{figure}[h]
	\centering
	\subfigure[$d=10, n=250, p=0.9$]{\includegraphics[width=\figwidth\textwidth]{./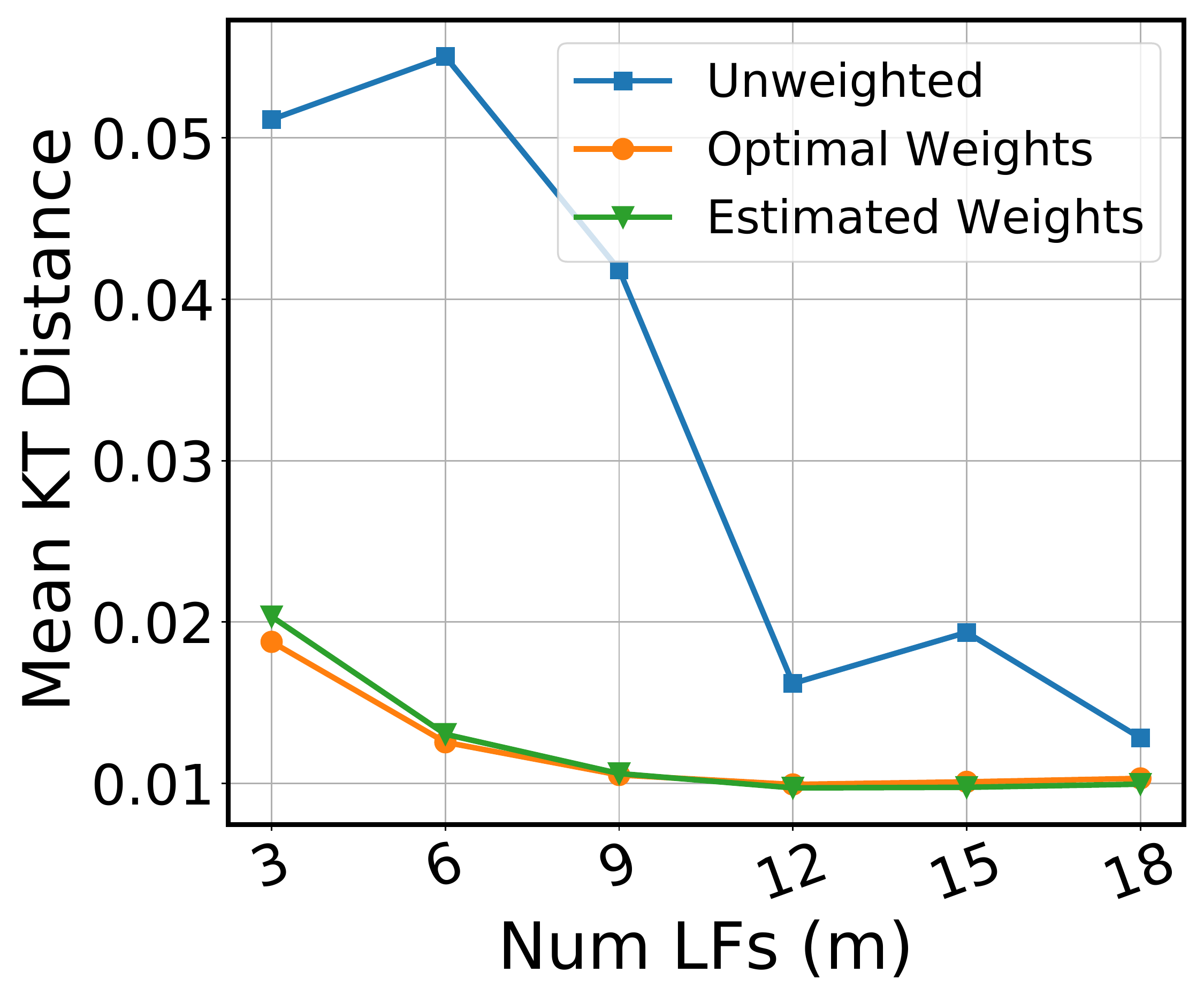}}
	\subfigure[$d=10, n=250, p=0.7$]{\includegraphics[width=\figwidth\textwidth]{./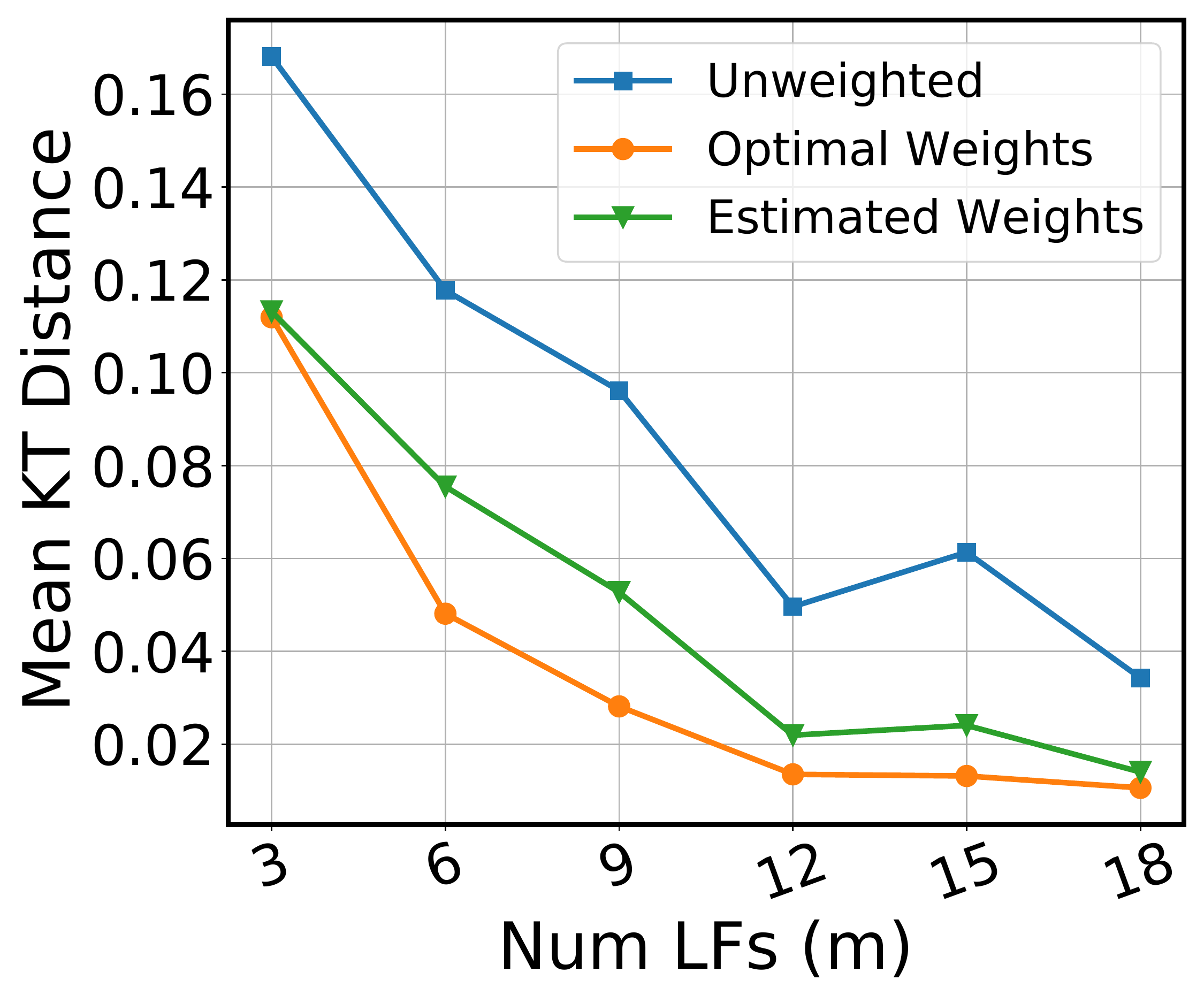}}
	\subfigure[$d=10, n=250, p=0.5$]{\includegraphics[width=\figwidth\textwidth]{./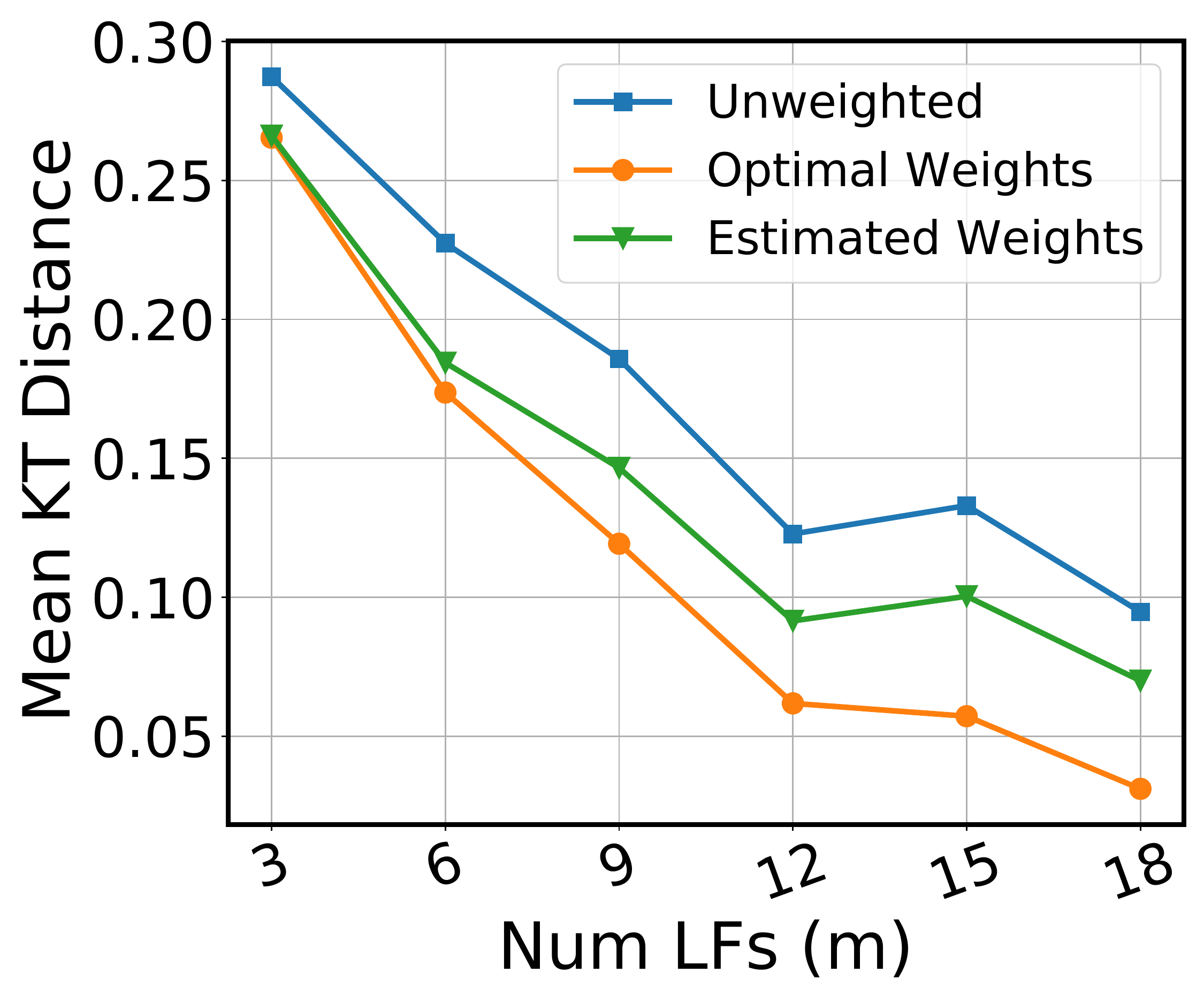}}
	\subfigure[$d=10, n=250, p=0.2$]{\includegraphics[width=\figwidth\textwidth]{./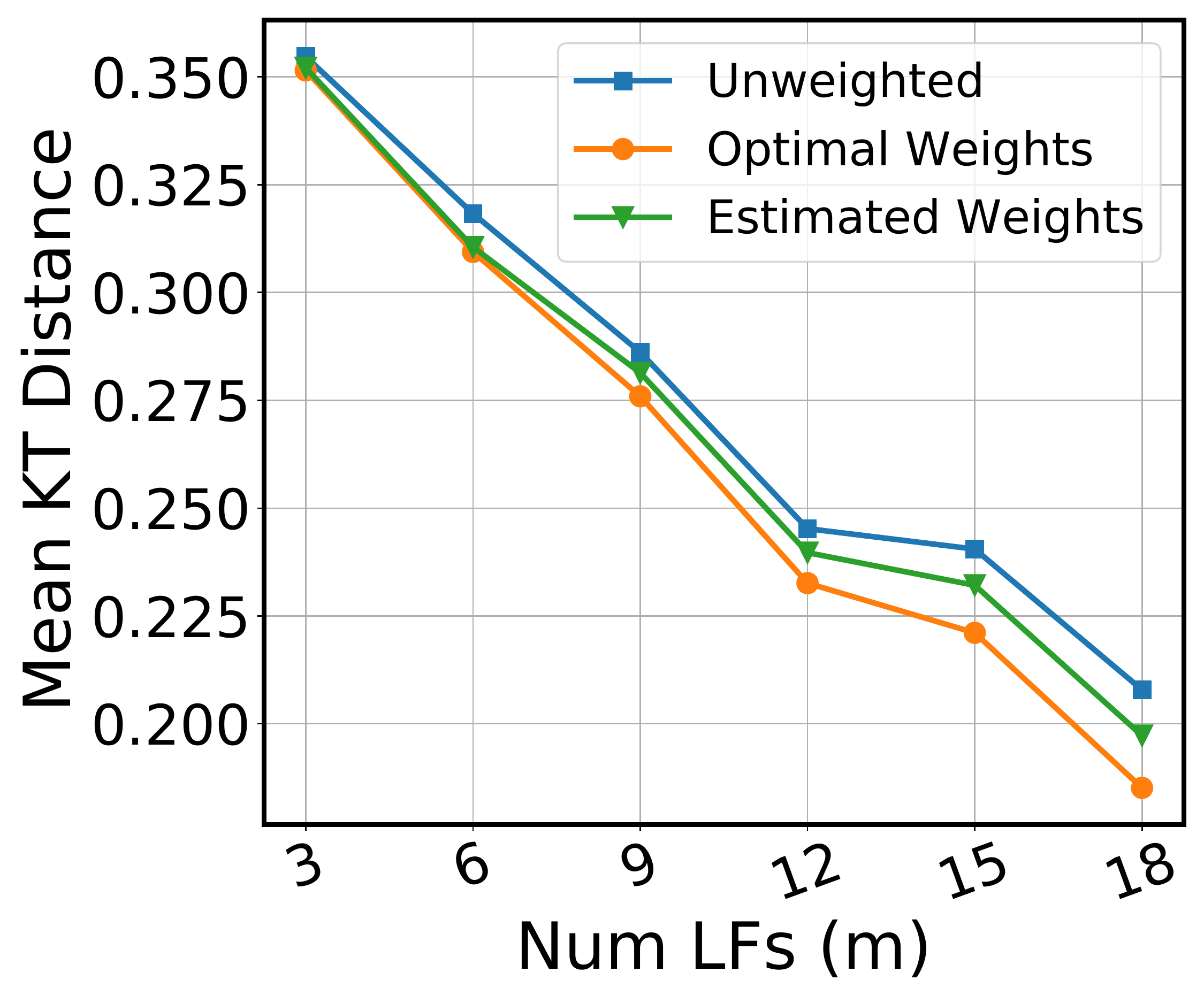}}

	\caption{Inference via weighted and standard majority vote over partial rankings with permutations of size $d=10$. 
    Error metric is Kendall tau distance (lower is better). Proposed inference rule is nearly optimal on full rankings.}
	\label{fig:partial-rank-agg}
\end{figure}

\subsection{Regression}
Similarly, we performed a synthetic experiment to show how our algorithm performs in parameter recovery.

\paragraph{Regression synthetic data generation}
 The data generation model is linear $Y = \beta^TX$, where our data is given by $(X, Y)$ with $X \in \mathbb{R}^q$ and $Y \in \mathbb{R}$. We generate $n$ such samples. Note that we did not add a noise variable $\varepsilon \sim \mathcal{N}(0, \sigma^2)$ here since we do not directly interact with $Y$; the noise exists instead in the labeling functions (i.e., the weak labels).
 
\paragraph{Parameter estimation in synthetic regression experiments}
\begin{figure}[h]
  \centering
  \subfigure[Number of samples]{
  \includegraphics[width=0.8\textwidth]{figures/synthetics/regression_synthetic_samples_full.pdf}
  }
  \subfigure[Number of LFs]{\includegraphics[width=0.8\textwidth ]{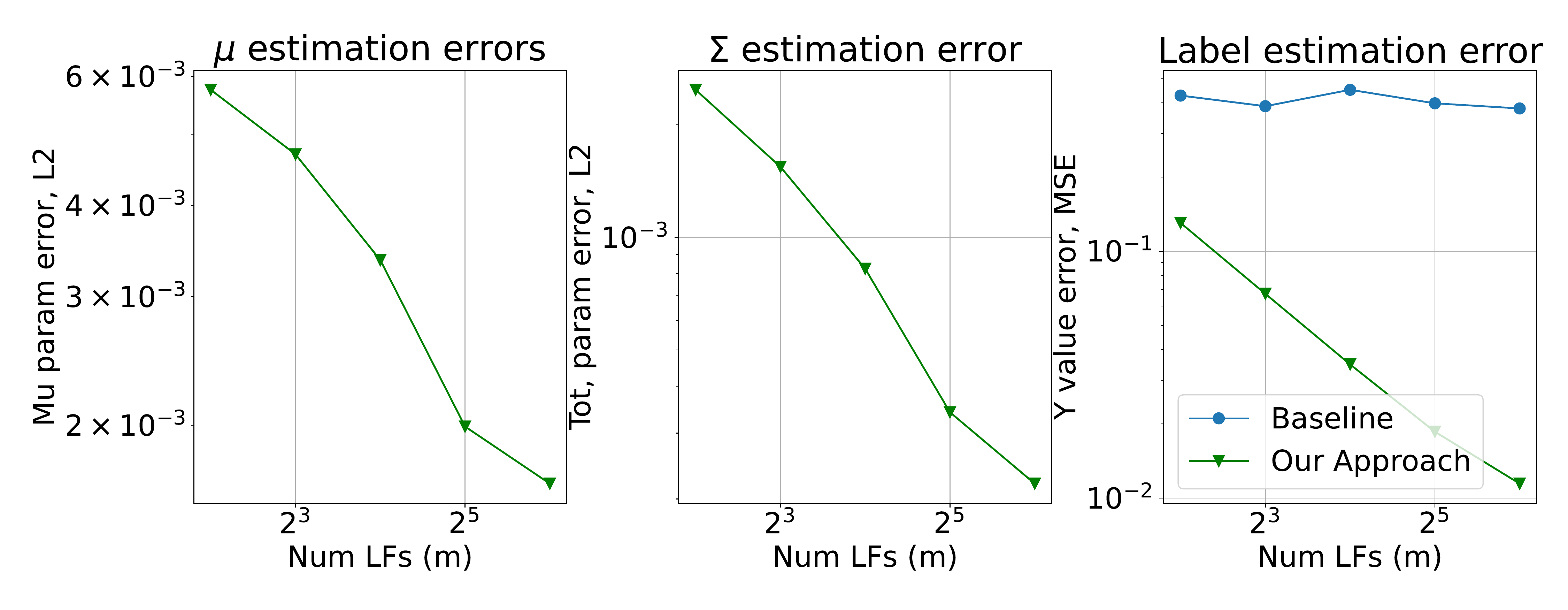}}
  \caption{Parameter and label estimation with varying the number of samples (top) and the number of labeling functions (bottom)}
  \label{fig:regression-synthetic-full}
\end{figure}
 Figure \ref{fig:regression-synthetic-full} reports results on synthetic data capturing label model estimation error for the accuracy and correlation parameters ($\mu, \sigma$) and for directly estimating the label $Y$. As expected, estimation improves as the number of samples increases. The top-right plot is particularly intuitive: here, our improved inference procedure vastly improves over naive averaging as it accesses sufficiently many samples to estimate the label model itself. On the bottom, we observe, as expected, that label estimation significantly improves with access to more labels.
\section{Additional Real LF Experiments and Results} \label{appendix:additional-real-LF}
We present a few more experiments with different types of labeling functions.

\subsection{Board Game Geek dataset}
In the board games dataset, we built labeling functions using simple programs expressing heuristics. For regression, we picked several continuous features and scaled them to a range and removed outliers. Similarly, for rankings, we picked what we expected to be meaningful features and produced rankings based on them. The selected features were ['owned', 'trading', 'wanting', 'wishing', 'numcomments'].


\begin{table}[h]
\begin{center}
\begin{tabular}{|c c c|}
 \hline
  & \# of training examples & Kendall Tau distance (mean $\pm$ std) \\ [0.5ex] 
 \hline\hline
 Fully supervised (5\%) & 250 & 0.1921 $\pm$ 0.0094 \\ 
 \hline
 Fully supervised (10\%) & 500 & 0.1829 $\pm$ 0.0068  \\
 \hline
 WS (Heuristics) & 5000 & 0.1915 $\pm$ 0.0011  \\
 \hline
 
\end{tabular}
\caption{
End model performance with true ranking LFs in BGG dataset.
}
\label{table:real_lf_bgg_ranking}
\end{center}
\end{table}

\begin{table}[h]
\begin{center}
\begin{tabular}{|c c c|}
 \hline
  & \# of training examples & MSE (mean $\pm$ std) \\ [0.5ex] 
 \hline\hline
 Fully supervised (1\%) & 144 & 0.4605 $\pm$ 0.0438 \\
 \hline
 Fully supervised with ``bad'' subset (10\%) & 1442 & 0.8043 $\pm$ 0.0013  \\
 \hline
 Fully supervised with ``bad'' subset (25\%) & 3605 & 0.4628 $\pm$ 0.0006 \\
 \hline
 WS (Heuristics) & 14422 & 	0.8824 $\pm$ 0.0005  \\
 \hline
\end{tabular}
\caption{
End model performance with true regression LFs in BGG dataset. The training data was picked based on the residuals in linear regression (resulting in a ``bad'' subset scenario for a challenging dataset). We obtain comparable performance.
}
\label{table:real_lf_bgg_regression}
\end{center}
\end{table}
In this case, we observed that despite not having access to any labels, we can produce performance similar to fully-supervised training on 5-10\% of the true labels. We expect that further LF development will produce even better performance.
 \begin{figure}
 	\centering
 	\subfigure [Movies dataset]{
 \includegraphics[width=0.35\textwidth]{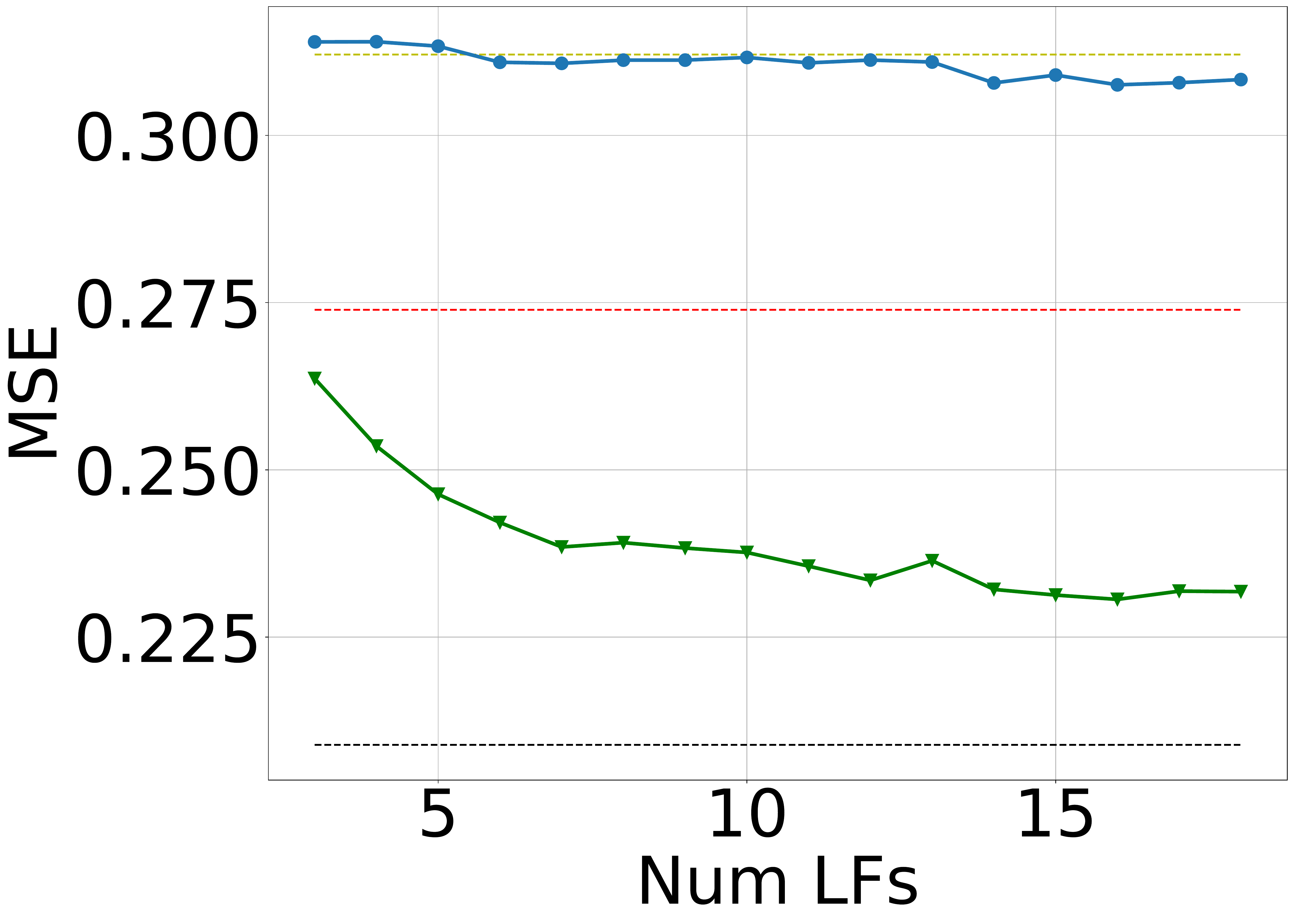}}
 \subfigure[BoardGameGeek dataset]{\includegraphics[width=0.35\textwidth]{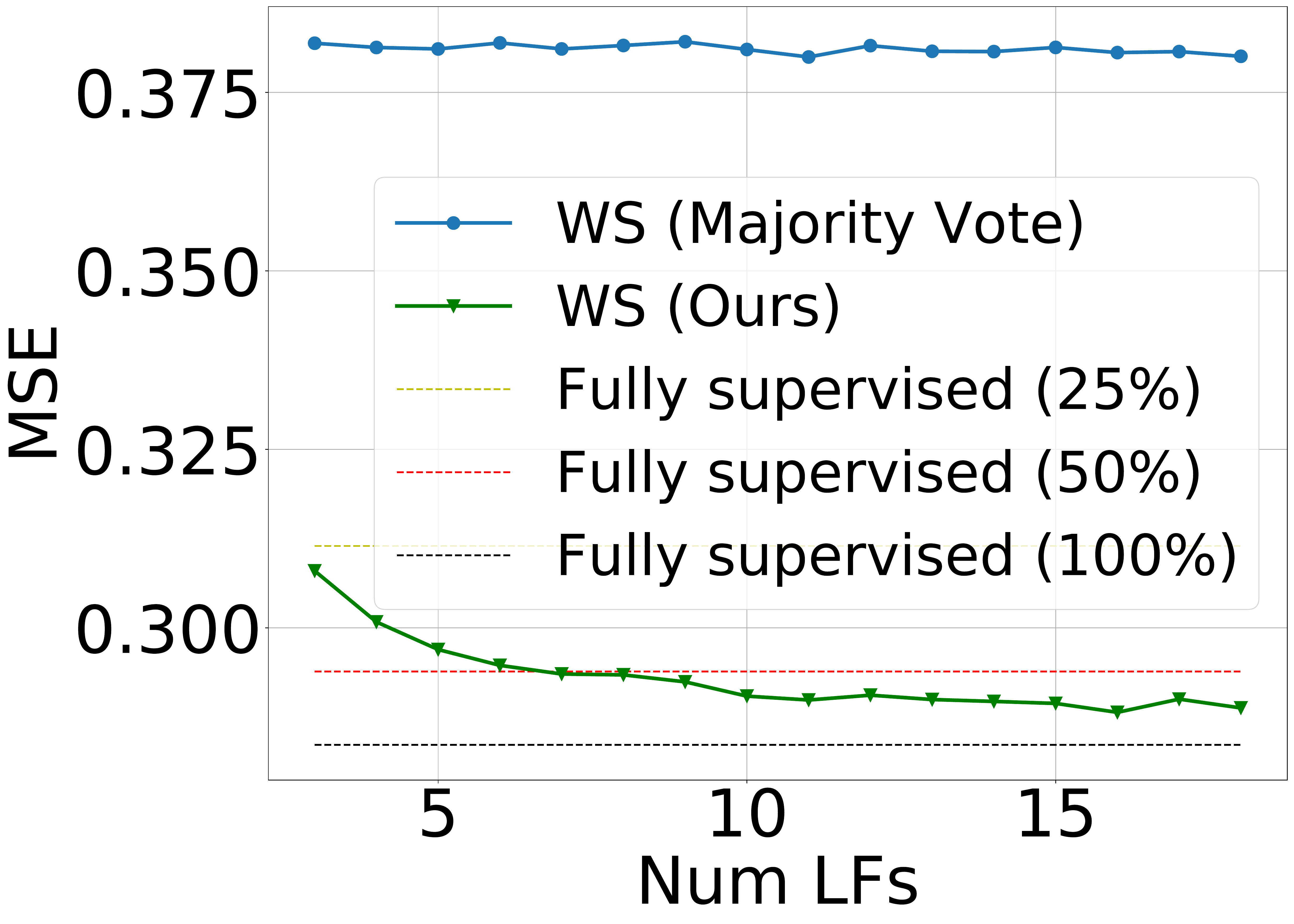}}
 \vspace{-4mm}
 \caption{End model performance with regression LFs (Left: Movies dataset, Right: BGG Dataset). Results from training a model on pseudolabels are compared to fully-supervised baselines on varying proportions of the dataset. Baseline is the averaging of weak labels. Metric is (MSE); lower is better.}
 \label{fig:regression-real}
 \end{figure}

\subsection{MSLR-WEB10K}
To further illustrate the strength of our approach we ran an experiment using unsupervised learning methods in information retrieval (such as BM25) as weak supervision sources. The task is information retrieval, the dataset is MSLR-WEB10K\cite{mslrweb10k}, and the model and training details are identical to the other experiments. We used several labeling functions including BM25 and relied on our framework to integrate these. The labeling functions were written over BM25 and features such as covered query term number, covered query term ratio, boolean model, vector space model, LMIR.ABS, LMIR.DIR, LMIR.JM, and query-url click count.

As expected, the synthesis of multiple sources produced better performance than BM25 \cite{Dehghani2017} alone. Despite not using any labels, we outperform training on 10\% of the data with true labels. This suggests that our framework for integrating multiple sources is a better choice than either hand-labeling or using a single source of weak supervision to provide weak labels. Below, for Kendall tau, lower is better.

\begin{table}[h]
\begin{center}
\begin{tabular}{|c c c|}
 \hline
  & Kendall tau distance & NDCG@1 \\ [0.5ex] 
 \hline\hline
 Fully supervised (10\%) & 0.4003 $\pm$  0.0151 & 0.7000 $\pm$ 0.0200\\
 \hline
 Fully supervised (25\%) & 0.3736 $\pm$ 0.0090 & 0.7288 $\pm$ 0.0077  \\
 \hline
 WS (\cite{Dehghani2017}) & 0.4001 $\pm$ 0.0063 & 0.7288 $\pm$  0.0077 \\
 \hline
 WS (Ours) & 0.3929 $\pm$ 0.0052 & 0.7402 $\pm$ 0.0119  \\
 \hline
\end{tabular}
\caption{
End model performance with true ranking LFs in MSLR-WEB10K dataset. Since the dataset has a lot of tie scores and the number of items is not uniform across examples, we sampled the examples with five unique scores (0, 1, 2, 3, 4). Also, in each example, items are randomly chosen so that each score occurs only once in each item set.}
\end{center}
\end{table}


\end{document}